		\newcommand{\inv}{^{-1}}
		\newcommand{\abs}[1]{\left|#1\right|}
		\newcommand{\of}{\circ}
		\newcommand{\restr}[1]{\left.#1\right|}
		\newcommand{\card}[1]{\left|#1\right|}
		\newcommand{\set}[1]{\left\{#1\right\}}
		\newcommand{\setmid}{\;\middle|\;}
		\newcommand{\reals}{\mathbb{R}}
		\newcommand{\R}{\reals}
		\newcommand{\norm}[1]{\abs{\abs{#1}}}
		\newcommand{\iprod}[1]{\left<#1\right>}
		\DeclareMathOperator{\proj}{proj}
		\newcommand{\closure}[1]{\overline{#1}}
		\DeclareMathOperator{\argmin}{argmin}
		\DeclareMathOperator{\piecewise}{pw}
		\newcommand{\st}{s.t.}
		\newcommand{\paren}[1]{\left(#1\right)}
		\newcommand{\ptxt}[1]{\textrm{\textnormal{#1}}}
		\newcommand{\mc}[1]{\mathcal{#1}}
		\newcommand{\mb}[1]{\mathbb{#1}}
		\newtheorem{theorem}{Theorem}
		\newtheorem{assumption}{Assumption}
		\newtheorem{lemma}{Lemma}
	\LetLtxMacro\orgvdots\vdots
	\LetLtxMacro\orgddots\ddots
	\DeclareRobustCommand\vdots{%
		\mathpalette\@vdots{}%
	}
	\newcommand*{\@vdots}[2]{%
		\sbox0{$#1\cdotp\cdotp\cdotp\m@th$}%
		\sbox2{$#1.\m@th$}%
		\vbox{%
			\dimen@=\wd0 %
			\advance\dimen@ -3\ht2 %
			\kern.5\dimen@
			\dimen@=\wd2 %
			\advance\dimen@ -\ht2 %
			\dimen2=\wd0 %
			\advance\dimen2 -\dimen@
			\vbox to \dimen2{%
				\offinterlineskip
				\copy2 \vfill\copy2 \vfill\copy2 %
			}%
		}%
	}
	\DeclareRobustCommand\ddots{%
		\mathinner{%
			\mathpalette\@ddots{}%
			\mkern\thinmuskip
		}%
	}
	\newcommand*{\@ddots}[2]{%
		\sbox0{$#1\cdotp\cdotp\cdotp\m@th$}%
		\sbox2{$#1.\m@th$}%
		\vbox{%
			\dimen@=\wd0 %
			\advance\dimen@ -3\ht2 %
			\kern.5\dimen@
			\dimen@=\wd2 %
			\advance\dimen@ -\ht2 %
			\dimen2=\wd0 %
			\advance\dimen2 -\dimen@
			\vbox to \dimen2{%
				\offinterlineskip
				\hbox{$#1\mathpunct{.}\m@th$}%
				\vfill
				\hbox{$#1\mathpunct{\kern\wd2}\mathpunct{.}\m@th$}%
				\vfill
				\hbox{$#1\mathpunct{\kern\wd2}\mathpunct{\kern\wd2}\mathpunct{.}\m@th$}%
			}%
		}%
	}
	\tikzset{
	  symbol/.style={
		draw=none,
		every to/.append style={
		  edge node={node [sloped, allow upside down, auto=false]{$#1$}}}
	  }
	}
\Crefname{figure}{Fig.}{Figs.}
\Crefname{equation}{Eq.}{Eqs.}
\Crefname{lemma}{Lemma}{Lemmata}
\Crefname{proposition}{Proposition}{Propositions}
\Crefname{assumption}{Assumption}{Assumptions}
\Crefname{theorem}{Theorem}{Theorems}
\Crefname{section}{Section}{Sections}
\Crefname{subsection}{Subsection}{Subsections}
\Crefname{appendix}{Appendix}{Appendices}
\newcommand{\dist}{\mathsf{d}}
\newcommand{\rmd}{\mathrm{d}}
\newcommand{\Cpw}{\mc{C}^{1}_{\piecewise}}
\newcommand{\CpwM}{\mc{C}^{1}_{\piecewise}([0,1],\closure{\mc{M}})}
\newcommand{\bz}{B\'ezier}
\DeclareMathOperator{\SO}{SO}
\DeclareMathOperator{\SE}{SE}
\begin{document}

\title{Non-Euclidean Motion Planning with Graphs of Geodesically-Convex Sets}

\author{
	\authorblockN{
		Thomas Cohn\authorrefmark{1},
		Mark Petersen\authorrefmark{2},
		Max Simchowitz\authorrefmark{1}, and
		Russ Tedrake\authorrefmark{1}
	}
	\authorblockA{
		\authorrefmark{1}Computer Science and Artificial Intelligence Laboratory (CSAIL)\\
		Massachusetts Institute of Technology,
		Cambridge, Massachusetts 02139--4309\\ Email: \texttt{\{tcohn,msimchow,russt\}@mit.edu}
	}
	\authorblockA{
		\authorrefmark{2}School of Engineering and Applied Sciences (SEAS)\\
		Harvard University,
		Cambridge, Massachusetts 02138--2933\\ Email: \texttt{markpetersen@g.harvard.edu}
	}
}

\maketitle

\begin{abstract}
Computing optimal, collision-free trajectories for high-dimensional systems is a challenging problem. Sampling-based planners struggle with the dimensionality, whereas trajectory optimizers may get stuck in local minima due to inherent nonconvexities in the optimization landscape. The use of mixed-integer programming to encapsulate these nonconvexities and find globally optimal trajectories has recently shown great promise, thanks in part to tight convex relaxations and efficient approximation strategies that greatly reduce runtimes. These approaches were previously limited to Euclidean configuration spaces, precluding their use with mobile bases or continuous revolute joints. In this paper, we handle such scenarios by modeling configuration spaces as Riemannian manifolds, and we describe a reduction procedure for the zero-curvature case to a mixed-integer convex optimization problem. We demonstrate our results on various robot platforms, including producing efficient collision-free trajectories for a PR2 bimanual mobile manipulator.
\end{abstract}

\IEEEpeerreviewmaketitle

\section{Introduction}
\label{sec:introduction}

Planning the motion of robots through environments with obstacles is a long-standing and ever-present problem in robotics.
In this paper, we aim to find the shortest path between a start and goal configuration
 with guaranteed collision avoidance.
We are particularly motivated by planning for bimanual mobile manipulators, such as the PR2 (Willow Garage).
Such robots are well-suited for a variety of tasks in human environments but present various challenges for existing motion planning algorithms.

Most popular approaches for this task fall into two categories: sampling-based planners and trajectory optimizers.
Trajectory optimizers formulate the motion planning problem as an optimization problem.
This problem is inherently nonconvex when there are obstacles in the scene, so trajectory optimizers frequently get stuck in local minima.
In that case, they may output a path that is longer than the global optimum or even fail to produce a valid path at all.

On the other hand, sampling-based planners can avoid getting stuck in local minima, but the path may be locally suboptimal, resulting in jerky and uneven motion.
Sampling-based planners may also suffer from the so-called ``Curse of Dimensionality''.
Because they rely on covering the configuration space with discrete samples, in the worst case, the number of samples required may increase exponentially with the dimension.
The PR2 has two 7-DoF arms and a mobile base, and sampling-based planners struggle with the instances we study here.

Recently, \citet{marcucci2021shortest,marcucci2022motion} described a new type of motion planning, based on a decomposition of the collision-free subset of configuration space (C-Free) into convex sets.
They leverage a new optimization framework, a \emph{Graph of Convex Sets} (GCS), where each vertex is associated with a convex set and each edge is associated with a convex function.
Motion planning becomes a shortest-path problem in this space.
This GCS-Planner has been successfully applied to challenging, high-dimensional problems, including bimanual manipulation problems.

\begin{figure}
	\centering
	\includegraphics[width=\linewidth]{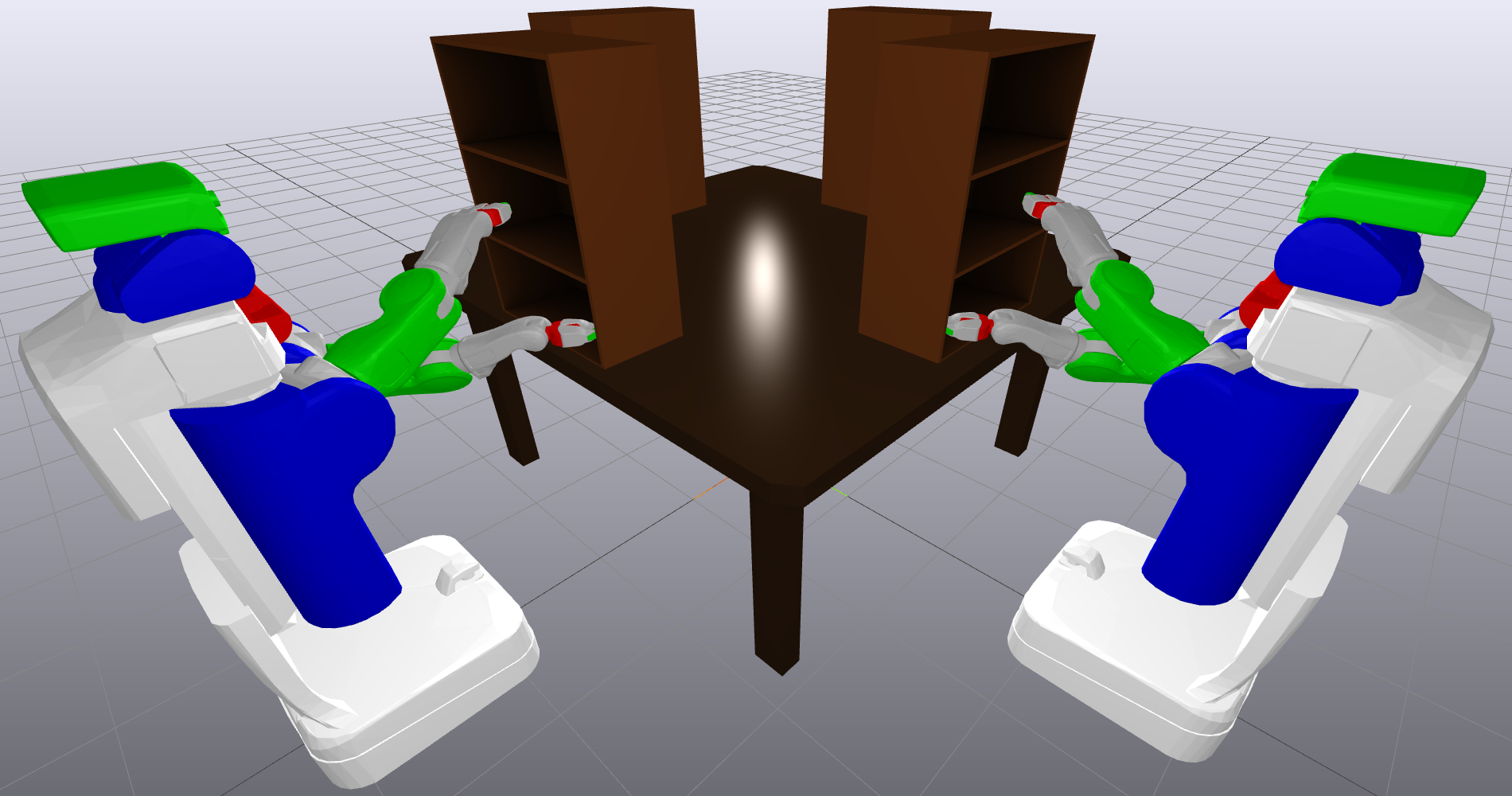}
	\caption{
		The start and goal pose for one of our motion planning experiments, using the PR2 bimanual mobile manipulator.
	}
	\label{fig:teaser}
\end{figure}

However, GCS-Planner is limited to Euclidean configuration spaces.
A mobile manipulator's configuration space is inherently non-Euclidean due to the mobile base: the robot can rotate through a full 360\textdegree, and its configuration is identical to when it started.
Continuous revolute joints present a similar issue.
Although the configuration spaces of interest are inherently non-Euclidean, they are still ``locally'' Euclidean, leading to elegant descriptions as differentiable manifolds.
With a Riemannian metric, which allows one to measure distance on a manifold, the concepts of convexity generalize to nonlinear spaces.
This in turn allows optimization on manifolds with rigorous guarantees, analogous to those obtained from convex optimization on Euclidean spaces.
However, this does preclude us from considering nonconvex costs and constraints, such as those derived from nonlinear dynamics and closed kinematic chains.
We also cannot handle ball joints, as their configuration spaces lead to unavoidable nonconvexities, which we describe in \Cref{subsec:positive_curvature_nonconvexity}.

In this paper, we formulate the general problem of shortest-path motion planning around obstacles on Riemannian manifolds.
We define a graph of \emph{geodesically-}convex sets (GGCS), the analogue to GCS on a Riemannian manifold.
We prove that this formulation has all the requisite properties needed to inherit the same guarantees as (Euclidean) GCS.
We then turn our attention to a certain class of robot configuration spaces, encompassing open kinematic chains with continuous revolute joints and mobile bases.
We show that in this case, our theoretical developments lead to simple and elegant modifications to the original GCS-Planner.
We entitle this generalization \emph{GGCS-Planner}, and demonstrate its efficacy with several challenging motion planning experiments.

\section{Related Work}
\label{sec:related_work}

In the world of continuous motion planning around obstacles, most popular techniques fall into two categories: sampling-based planners and trajectory optimizers.

Sampling-based motion planners partially cover C-Free with a large number of discrete samples.
Two of the foundational sampling-based planning algorithms are Probabilistic Roadmaps (PRMs)~\cite{kavraki1996probabilistic} and Rapidly-Exploring Random Trees (RRTs)~\cite{lavalle1998rapidly}.
Such algorithms are probabilistically complete, i.e., with enough samples, they will always find a valid path (if one exists).
However, these algorithms are only effective if a valid plan can be produced with a reasonable number of samples.
Hence, the ``curse of dimensionality'' is a potential obstacle to sampling-based planning, and such techniques have struggled with high-dimensional problems such as bimanual manipulation.
In most cases, planning for bimanual tasks is accomplished by planning for one arm, then planning the second arm independently while treating the first arm as a dynamic obstacle.
This is a reasonable heuristic for some tasks, but it sacrifices even probabilistic completeness.

An alternative approach is to formulate motion planning as an optimization problem.
This requires parametrizing the space of all trajectories and defining constraints and cost functions that describe the suitability of each trajectory.
Examples of kinematic trajectory optimization include B-spline parametrizations using constrained optimization~\cite[\S7.2]{tedrake2022manipulation}, CHOMP~\cite{zucker2013chomp}, STOMP~\cite{kalakrishnan2011stomp}, and KOMO~\cite{toussaint2014newton}.
Trajectory optimization approaches do not suffer from the curse of dimensionality, and are suitable for much more complex robotic systems.
But the optimization landscape is inherently nonconvex, so trajectory optimization methods often cannot achieve global optimality and can often fail to produce feasible trajectories when solutions exist.

The use of mixed integer programming (MIP) to solve motion planning problems to global optimality has recently seen an increase in popularity as new theoretical results, greater computational resources, and powerful commercial solvers~\cite{gurobi,mosek} have been brought to bear.
The survey paper of \citet{ioan2021mixed} provides an overview of the use of MIP in motion planning.
Besides the work of \citet{marcucci2021shortest}, \citet{landry2016aggressive} used MIP to plan aggressive quadrotor flights through obstacle-dense environments.
MIP has been used to plan footstep locations for humanoid robots~\cite{deits2014footstep} and for quadrupeds~\cite{valenzuela2016mixed,aceituno2017simultaneous}.
\citet{dai2019global} used MIP to globally solve the inverse kinematics problem.
Finally, MIP has seen extensive use in hybrid task and motion planning~\cite{adu2022optimal,tika2022robot,saha2017task,yi2022online,chen2022cooperative}.

Mixed integer programs can take a long time to solve in the worst case, but it is often possible to mitigate this problem with appropriate relaxations or approximations~\cite{suh2020energy,marcucci2022motion}.
GCS in particular uses an MIP formulation with a small number of integer variables, making branch-and-bound tractable.
Furthermore, the convex relaxation is tight, enabling efficient approximation by solving only a convex problem combined with a randomized rounding strategy. \cite{marcucci2022motion} argued that for single-arm manipulators, this approach can find more optimal plans in less time than PRMs.
These valuable properties carry over to our extension of GCS.

Another recent trend in motion planning has been the use of Riemannian geometry to model the problem.
Riemannian Motion Policies (RMPs)~\cite{ratliff2018riemannian} combine acceleration-based controllers across different task spaces into a single unified controller.
A Riemannian metric in each task space determines the priority of a given controller, and smooth maps between the manifolds enable the averaging of controllers.
RMPs have seen continued improvement~\cite{cheng2018rmpflow,rana2021towards} and generalization~\cite{van2022geometric,bylard2021composable}.
\citet{klein2022riemannian} envision Riemannian geometry as a tool for generating and combining elegant motion synergies for complex robotic systems.

\section{Preliminaries}
\label{sec:preliminaries}

In this section, we cover some of the relevant mathematical background.
We supply intuitive definitions; for further reference on Riemannian geometry, see the textbooks of \citet{docarmo1992riemannian} and \citet{lee2013smooth,lee2018introduction}.
\citet{boumal2020introduction} provides an excellent treatment of optimization over manifolds.
We use the notation $[n]=\set{1,\ldots,n}$.

\subsection{Riemannian Geometry}

A $d$-dimensional (topological) \emph{manifold} $\mc{M}$ is a locally Euclidean topological space: for any $p\in\mc{M}$, there is an open neighborhood $\mc{U}$ of $p$ and a continuous map $\psi:\mc{U}\to\R^{d}$ which is a homeomorphism onto its image.
The pair $(\mc{U},\psi_\mc{U})$ is called a \emph{coordinate chart}, and for any pair of overlapping charts $(\mc{U},\psi_\mc{U}^{\vphantom{1}})$ and $(\mc{V},\psi_\mc{V}^{\vphantom{1}})$, we have a \emph{transition} map
\begin{equation}
	\tau_{\mc{U},\mc{V}}^{\vphantom{1}}=\restr{\psi_\mc{V}^{\vphantom{1}}\of \psi_\mc{U}\inv}_{\psi_\mc{U}^{\vphantom{1}}(\mc{U}\cap \mc{V})}
\end{equation}
A collection of charts whose domains cover the manifold is called an \emph{atlas}.
We only consider $\mc{C}^{\infty}$-\emph{smooth} manifolds, where all transition maps in the atlas are $\mc{C}^{\infty}$.

For each $p\in\mc{M}$, the \emph{tangent space} $\mc{T}_p\mc{M}$ is a $d$-dimensional vector space representing the set of directional derivatives at $p$.
Given a differentiable curve $\gamma:(-\epsilon,\epsilon)\to\mc{M}$ with $p=\gamma(0)$, this affords an interpretation of the \emph{velocity} of $\gamma$ at $p$, $\gamma'(0)$, as an element of $\mc{T}_p\mc{M}$.
For a smooth map of manifolds $f:\mc{M}\to\mc{N}$, the \emph{pushforward} of $f$ at $p$ is a linear map $f_{*,p}:\mc{T}_p\mc{M}\to\mc{T}_{f(p)}\mc{N}$, generalizing the Jacobian matrix~\cite[p. 55]{lee2013smooth}.
The pushforward is defined so that, with $\gamma$ defined as above, $f_{*,p}(\gamma'(0))=(f\of\gamma)'(0)$.

A \emph{Riemannian metric} $g$ is a smoothly-varying positive-definite bilinear form over $\mc{M}$ that gives each tangent space $\mc{T}_p\mc{M}$ an inner product $\iprod{\;\cdot\;,\;\cdot\;}_{p}^{(\mc{M},g)}$.
The pair $(\mc{M},g)$ is a \emph{Riemannian manifold}, and we frequently refer to $\mc{M}$ exclusively when the choice of metric is unambiguous.
A Riemannian metric allows one to measure the length of a curve, invariant to reparametrizations~\cite[p.~34]{lee2018introduction}; if $\gamma:[a,b]\to\mc{M}$ is piecewise continuously differentiable, then
\begin{equation}
L(\gamma)=\int_a^b\sqrt{\iprod{\gamma'(s),\gamma'(s)}_{\gamma(s)}^{(\mc{M},g)}}\,\rmd s
	\label{eq:arc_length}
\end{equation}
We call the integrand the \emph{speed} of $\gamma$.
The distance between any two points $p,q\in\mc{M}$ is defined as the infimum of the arc length of all curves connecting them:
\[
	\dist_{\mc{M}}(p,q)\!=\!\inf\set{L(\gamma)\middle|\gamma\in\mc{C}^1_{\piecewise}([0,1],\mc{M}),\gamma(0)\!=\!p,\gamma(1)\!=\!q}
\]
where $\mc{C}^{1}_{\piecewise}([0,1],\mc{M})$ is the set of parametric piecewise-continuously differentiable curves from the interval $[0,1]$ to $\mc{M}$.
A curve that achieves this infimum need not exist in general~\cite[p.~146]{docarmo1992riemannian}.
We also define $\dist_{\mc{U}}(p,q)$ for $p,q\in\mc{U}\subseteq\mc{M}$ to be the infimum of the length of paths whose image is contained in $\mc{U}$.

If $\mc{M}$ is connected, it is a metric space with respect to $\dist_{\mc{M}}$.
Given two Riemannian manifolds $(\mc{M},g)$ and $(\mc{N},h)$, a smooth function $f:\mc{M}\to\mc{N}$ is a \emph{local isometry} if
\[
	\iprod{u,v}_{p}^{(\mc{M},g)}=\iprod{f_{*,p}(u),f_{*,p}(v)}_{f(p)}^{(\mc{N},h)}
\]
$\forall p\in\mc{M}$, $\forall u,v\in\mc{T}_p\mc{M}$.
If $f$ is also a diffeomorphism, and $\mc{M}$ and $\mc{N}$ are connected, then $f$ preserves distances~\cite[p.~37]{lee2018introduction}, and is an \emph{isometry} of metric spaces.
The converse is also true~\cite{myers1939group}.

A \emph{geodesic} is a locally length-minimizing curve, parameterized to be constant speed.
Locally length-minimizing means that for two points on the geodesic that are close enough, the geodesic traces out the shortest path between them.
For example, geodesics in Euclidean space with the natural metric are lines, rays, and line segments, and geodesics on the sphere (with the induced metric from Euclidean space) are great circles.
Constructing the shortest geodesic between two points is a variational calculus problem, so the solution must satisfy the Euler-Lagrange system of differential equations.
Thus, initial conditions $p\in\mc{M}$ and $v\in\mc{T}_p\mc{M}$ uniquely define a geodesic, such that $v$ is the velocity of the geodesic as it passes through $p$.
This is used to define the \emph{exponential map} $\exp_p:\mc{T}_p\mc{M}\to\mc{M}$, where the direction of a vector $v$ defines the direction of the geodesic, and the magnitude of $v$ determines how far to move in that direction away from $p$.

A Riemannian metric induces \emph{curvature} on a manifold, capturing how local geometry differs from the standard Euclidean case.
The \emph{sectional curvature} at a point $p$ is a real-valued function defined on $2$-dimensional subspaces of the tangent space $\mc{T}_p\mc{M}$~\cite[\S4.3]{docarmo1992riemannian}.
(We write $\mc{K}(u,v)$ for any vectors $u$ and $v$ that span the subspace.)
Informally, the sectional curvature corresponds to the distortion of angles in triangles, as shown in \Cref{fig:sectional-curvature}.
Manifolds that have everywhere-zero curvature are called \emph{flat}, and are locally isometric to Euclidean space. %

The Cartesian product of two Riemannian manifolds is itself a Riemannian manifold.
The curvature of the component manifolds influences the curvature of the product.
Importantly, \emph{the product of flat manifolds is flat}~\cite{atceken2003product}.
As we explain in \Cref{sec:ggcs}, this implies that a robot with a mobile base and (potentially many) continuous revolute joints has a flat configuration space.

\begin{figure}[t]
	\centering
	\begin{subfigure}[b]{0.3\linewidth}
		\centering
		\includegraphics[width=\linewidth]{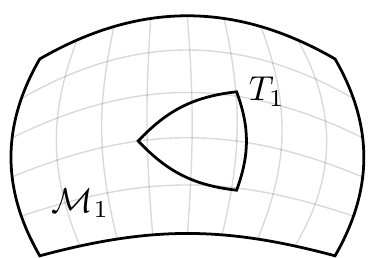}
		\caption{Positive}
		\label{subfig:sectional-curvature-positive}
	\end{subfigure}
	\hfill
	\begin{subfigure}[b]{0.3\linewidth}
		\centering
		\includegraphics[width=\linewidth]{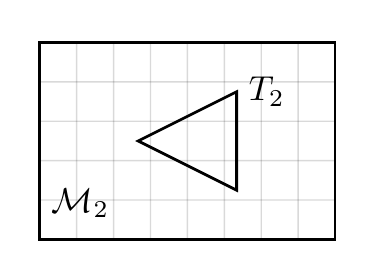}
		\caption{Zero (flat)}
		\label{subfig:sectional-curvature-zero}
	\end{subfigure}
	\hfill
	\begin{subfigure}[b]{0.3\linewidth}
		\centering
		\includegraphics[width=\linewidth]{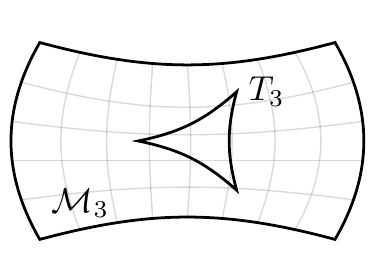}
		\caption{Negative}
		\label{subfig:sectional-curvature-negative}
	\end{subfigure}
	\caption{
		Examples of geodesic triangles $T_i$ in manifolds $\mc{M}_i$ with various sectional curvatures.
		In positive curvature spaces, the interior angles sum to more than 180\textdegree{}, and the edges bow outwards.
		The opposite is true in negative curvature spaces.
	}
	\label{fig:sectional-curvature}
\end{figure}

\subsection{Convex Analysis on Manifolds}

To define convexity on a Riemannian manifold $(\mc{M},g)$, we replace the notion of lines with geodesics.
In general, there is not a unique geodesic (or even a unique shortest geodesic) between two points, so a more intricate definition is required.
A subset $\mc{U}\subseteq\mc{M}$ is \emph{strongly geodesically convex} (or \emph{g-convex}) if $\forall p,q\in\mc{U}$, there is a unique length-minimizing geodesic connecting $p$ and $q$, and it is entirely contained in $\mc{U}$.
This definition ensures that the intersection of g-convex sets is g-convex, and that there is a unique shortest path between any pair of points in a g-convex set.
Weaker definitions used in other works~\cite{vishnoi2018geodesic,zhang2016first} do not provide these guarantees.
See \cite[\S11.3]{boumal2020introduction} for further discussion.

G-convex neighborhoods exist around every point \cite[p.~77]{docarmo1992riemannian}
For any $p\in\mc{M}$, there is a \emph{convexity radius} $r_p>0$, such that the open ball
\begin{equation}
	B_r(p)=\set{\exp_p(q)\setmid q\in\mc{T}_p\mc{M},\norm{q}<r}
\end{equation}
is g-convex for any $r<r_p$ (where the norm is induced by the Riemannian metric).
Intuitively, the convexity radius quantifies how large a set can be before minimizing geodesics can go ``the wrong way around'' the manifold.
On the product of two Riemannian manifolds, each geodesic is naturally the product of geodesics on its components.
Thus, the product of g-convex sets is g-convex in the product manifold.

A function $f:\mc{M}\to\R$ is said to be \emph{geodesically convex} (\emph{g-convex}) on $\mc{U}\subseteq\mc{M}$ if, for any geodesic $\gamma:[0,1]\to\mc{U}$, $(f\of\gamma)$ is a convex function on $[0,1]$.
That is, $\forall t\in[0,1]$,
\begin{equation}
	f(\gamma(t))\le (1-t)f(\gamma(0))+tf(\gamma(1))
\end{equation}
We say that $f$ is \emph{locally} g-convex if for any $p\in\mc{M}$, there exists a neighborhood $\mc{U}_p$ of $p$ such that the restriction of $f$ to $\mc{U}$ is g-convex.

Unfortunately, existing research into g-convex optimization often focuses on specific classes of manifolds that do not encompass the configuration spaces of interest~\cite{bacak2014convex,zhang2016first}.
In addition, there is little existing literature studying mixed-integer Riemannian convex optimization, and techniques commonly used in the Euclidean case (e.g., cutting planes~\cite{marchand2002cutting}) may not generalize to Riemannian manifolds.

\section{Problem Statement}
\label{sec:statement}

We may now precisely state our kinematic planning problem in the language of Riemannian geometry developed thus far.
Let $(\mc{Q},g)$ be the configuration space of a robot, realized as a connected Riemannian manifold\footnote{
	We also require $\mc{Q}$ to be \emph{complete}~\cite[p. 598]{lee2013smooth} w.r.t. the metric induced by $g$.
}.
Suppose that the set of collision-free configurations is a bounded open subset $\mc{M}\subseteq\mc{Q}$, and without loss of generality, assume that $\mc{M}$ is path-connected.
(If $\mc{M}$ is not path-connected, then we restrict ourselves to planning within a single connected component.)

Suppose we want to find the shortest path between two points $p$ and $q$ in $\closure{\mc{M}}$, the closure of $\mc{M}$ (i.e., the smallest closed set containing $\mc{M}$).
This can be written as the optimization problem
\begin{equation}
	\begin{array}{rl}
		\argmin & L(\gamma)\\
		\ptxt{\st} & \gamma \in \CpwM\\
		& \gamma(0)=p\\
		& \gamma(1)=q
	\end{array}
	\label{eq:motion_planning_formulation}
\end{equation}
where $L$ is the Riemannian arc length, given in \Cref{eq:arc_length}.
In the following sections, we develop machinery to solve optimization problems of this form.

\section{Graphs of Geodesically-Convex Sets}
\label{sec:ggcs}

We now introduce a \emph{graph of geodesically convex sets} (GGCS), a Riemannian optimization framework that, in \Cref{sec:motion_planning}, we show is general enough to encompass Problem~\eqref{eq:motion_planning_formulation}.
A GGCS is a directed graph $G=(V,E)$ with certain properties, designed as a generalization of ordinary (Euclidean) GCS from \citet[\S 2]{marcucci2021shortest} to Riemannian manifolds.
Each vertex $v\in V$ has a corresponding a g-convex subset $\mc{Y}_{v}$ of some Riemannian manifold $(\mc{M}_{v},g_v)$. 
With each edge $e=(u,v)\in E$, we associate a cost function $\ell^{\mc{Y}}_{e}:\mc{Y}_{u}\times\mc{Y}_{v}\to \R_{\ge 0}\cup\set{\infty}$, which must be g-convex with respect to the product metric on $\mc{M}_{u}\times\mc{M}_{v}$.
For all problems considered in this paper, every g-convex set will be a subset of a single Riemannian manifold.

Given distinct source and target vertices $p,q\in V$, a \textit{path} $\pi$ from $p$ to $q$ is a sequence of distinct vertices $(v_k)_{k=0}^{K}$ such that $v_0=p$, $v_K=q$, and $(v_{k-1},v_k)\in E$ for all $k\in[K]$.
Associate to this path a sequence of points $y_\pi=(y_0,\ldots,y_K)$ such that each $y_v\in\mc{Y}_{v}$; then the length of this path is
\begin{equation}
	\ell^{\mc{Y}}_\pi(y_\pi)=\sum_{k=1}^{K}\ell^{\mc{Y}}_{(v_{k-1},v_k)}(y_{k-1}, y_k)
	\label{eq:ggcs_path_length}
\end{equation}
Let $\Pi$ denote the set of all paths from $p$ to $q$, and for any $\pi\in\Pi$, define its feasible vertices as $\mc{Y}_{\pi}=\mc{Y}_{v_{0}}\times\cdots\times\mc{Y}_{v_{K}}$. The problem of finding the shortest path from $p$ to $q$ can be written as
\begin{equation}
	\underset{\pi\in\Pi}{\min}\,\underset{y_\pi\in\mc{Y}_{\pi}}{\min}\ell_{\pi}^{\mc{Y}}(y_{\pi})
	\label{eq:ggcs_compact}
\end{equation}

\begin{figure}
	\centering
	\includegraphics[width=\linewidth]{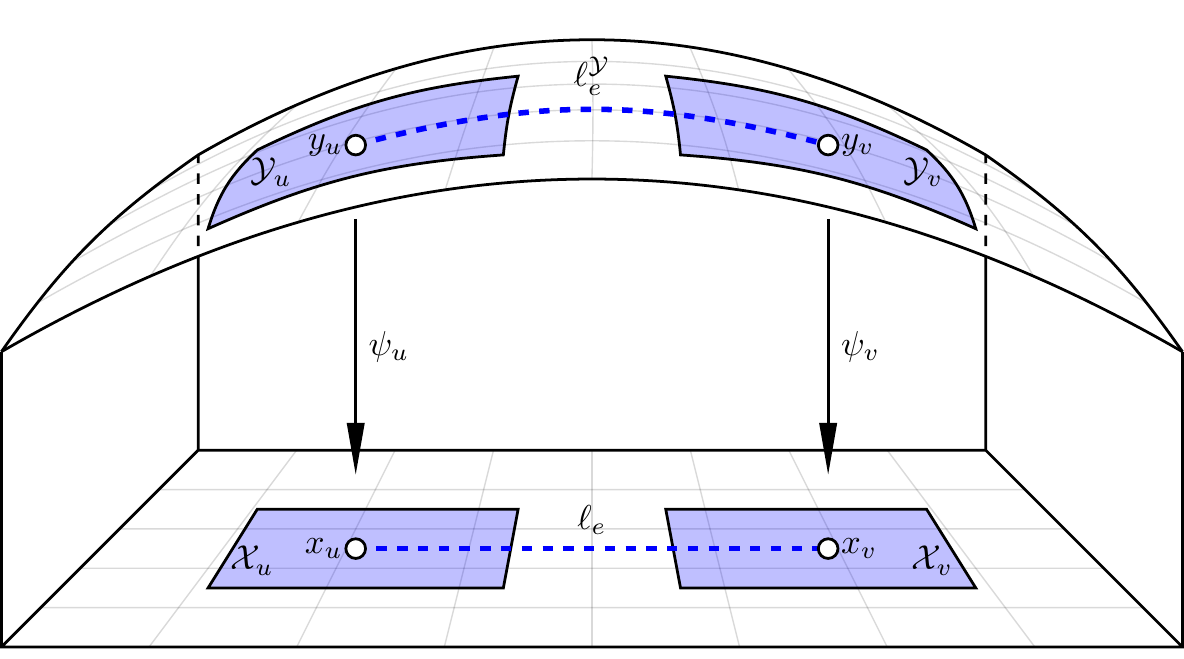}
	\caption{Moving edges and sets from Riemannian manifolds to Euclidean spaces with coordinate charts.
	In this diagram, $\mc{Y}_{u}$ and $\mc{Y}_{v}$ are visualized as part of the same Riemannian manifold, although this need not be true in general.}
	\label{fig:edge-pullback}
\end{figure}

Solving Problem~\eqref{eq:ggcs_compact} to optimality is intractable in complete generality, so we propose to transform it into an ordinary GCS problem.
To each $v\in V$, associate a chart $\psi_v$, and define $\mc{X}_{v}=\psi_v(\mc{Y}_{v})$.
For each edge $e=(u,v)\in E$, we define the edge cost on $(x_u,x_v) \in \mc{X}_u \times \mc{X}_v$:
\begin{equation}
	\ell_{e}(x_u,x_v)=\ell_{e}^{\mc{Y}}(\psi_u\inv(x_u),\psi_v\inv(x_v)).
	\label{eq:pullback_edge_cost}
\end{equation} 
This construction is shown in \Cref{fig:edge-pullback}.
To apply the GCS machinery, we require that the sets $\mc{X}_v$ and edge costs $\ell_{e}(x_u,x_v)$ are convex.
As we show in \Cref{subsec:positive_curvature_nonconvexity}, this is hopeless for manifolds with positive curvature.
Luckily, for flat manifolds, convexity can be ensured, as will be shown in \Cref{subsec:formulation_as_gcs}.

Importantly, many robot configuration spaces can be realized as flat manifolds.
$\operatorname{SE}(2)$ is flat, all $1$-dimensional manifolds are flat~\cite[p.~47]{lee2018introduction} (this encompasses continuous revolute joints), and products of flat manifolds are flat.
Thus, any robotic system whose configuration can be described using a series of single-degree-of-freedom joints (and potentially a mobile base) will have a flat configuration space, and thus can be handled by our methodology.
2-DoF universal joints can also be handled, as they can be perfectly represented as two juxtaposed 1-DoF joints.
3-DoF ball joints cannot be handled, because decomposing a ball joint into 1-DoF joints distorts the underlying geometry.

\section{Motion Planning with GGCS}
\label{sec:motion_planning}

We want to use GGCS to make motion plans on Riemannian manifolds by solving Problem~\eqref{eq:motion_planning_formulation}.
Thus, we must prove that the optimal value is achieved by some trajectory that is feasible for a GGCS problem.
We use the initialism ROSC (Riemannian Open Subset Closure) to describe closures of open subsets of Riemannian manifolds, notably $\closure{\mc{M}}$.
ROSCs are topological manifolds-with-boundary, but the boundary may not be smooth; for example, polytopic obstacles lead to corners on the boundary of $\closure{\mc{M}}$.
The theory of manifolds-with-corners is not well developed in full generality, so for the sake of completeness, we confirm some expected properties of paths through ROSCs.

\begin{theorem}
	\textbf{(Existence of Optimal Trajectories)}
	For any $p,q\in\closure{\mc{M}}$, there exists a continuous curve $\gamma$ connecting them such that $L(\gamma)=\dist(p,q)$.
	\label{thm:burago}
\end{theorem}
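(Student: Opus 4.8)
The plan is to prove the statement by the direct method of the calculus of variations, working in the metric space $(\closure{\mc{M}},\dist)$, where $\dist$ denotes the intrinsic length metric on $\closure{\mc{M}}$ obtained by taking the infimum of Riemannian arc length over curves whose image lies entirely in $\closure{\mc{M}}$. First I would record two preliminary facts. Since $\mc{M}$ is path-connected and open in $\mc{Q}$, any two points of $\closure{\mc{M}}$ can be joined by a rectifiable curve inside $\closure{\mc{M}}$ (connect $p$ and $q$ to nearby interior points through g-convex balls, then join those interior points within the connected open set $\mc{M}$), so $\dist(p,q)<\infty$. Moreover, because $\mc{M}$ is bounded and $\mc{Q}$ is complete, the Hopf--Rinow theorem on $\mc{Q}$ guarantees that the closed, bounded set $\closure{\mc{M}}$ is compact in $(\mc{Q},\dist_{\mc{Q}})$.

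Next I would extract a convergent minimizing sequence. Take curves $\gamma_n$ in $\closure{\mc{M}}$ from $p$ to $q$ with $L(\gamma_n)\to\dist(p,q)$, and reparametrize each to constant speed on $[0,1]$, so that $\dist_{\mc{Q}}(\gamma_n(s),\gamma_n(t))\le\dist(\gamma_n(s),\gamma_n(t))\le L(\gamma_n)\,\abs{s-t}$. As $L(\gamma_n)$ is bounded, the family $\set{\gamma_n}$ is uniformly Lipschitz, hence equicontinuous, as a family of maps into the compact metric space $(\closure{\mc{M}},\dist_{\mc{Q}})$. Arzel\`a--Ascoli then yields a subsequence converging uniformly to a continuous curve $\gamma:[0,1]\to\closure{\mc{M}}$ with $\gamma(0)=p$ and $\gamma(1)=q$; the limit remains in $\closure{\mc{M}}$ because that set is closed.

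Finally I would pass the length functional to the limit by lower semicontinuity. For curves contained in the Riemannian manifold $\mc{Q}$, the arc length $L$ of \Cref{eq:arc_length} coincides with the metric length defined as the supremum over partitions of $\sum_i \dist_{\mc{Q}}(\gamma(t_{i-1}),\gamma(t_i))$, and this metric length is lower semicontinuous with respect to uniform convergence, so $L(\gamma)\le\liminf_n L(\gamma_n)=\dist(p,q)$. Since $\gamma$ is itself an admissible curve in $\closure{\mc{M}}$ joining $p$ to $q$, the reverse inequality $L(\gamma)\ge\dist(p,q)$ holds by the definition of $\dist$ as an infimum, giving $L(\gamma)=\dist(p,q)$.

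The main obstacle is this last step: justifying that the Riemannian arc length agrees with the metric (total-variation) length and that this functional is lower semicontinuous under uniform convergence for the merely continuous limit $\gamma$, since $\gamma$ need not be piecewise $\mc{C}^1$ and the boundary of $\closure{\mc{M}}$ may carry corners. I would handle this with the length-space machinery of Burago--Burago--Ivanov, whose lower semicontinuity of length applies to arbitrary continuous curves in a length space; equivalently, one can observe that $(\closure{\mc{M}},\dist)$ is a complete, locally compact length space and invoke the generalized Hopf--Rinow (Cohn--Vossen) theorem, which directly guarantees existence of a shortest path.
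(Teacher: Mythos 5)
Your proposal is correct, but it takes a more self-contained route than the paper. The paper's proof verifies that $(\closure{\mc{M}},\dist)$ is a complete, locally compact length space --- with a dedicated lemma that chains coordinate charts along a continuous connecting curve to produce a piecewise-smooth path of finite length, establishing finiteness of $\dist$ --- and then invokes Theorem 2.5.23 of Burago--Burago--Ivanov as a black box. You instead unfold the proof of that theorem in this special case: constant-speed reparametrization, a uniform Lipschitz bound, Arzel\`a--Ascoli against the compactness of $\closure{\mc{M}}$ (which you get globally from boundedness of $\mc{M}$ plus Hopf--Rinow on the complete ambient $\mc{Q}$, a genuine simplification over the mere local compactness the cited theorem needs), and lower semicontinuity of metric length under uniform convergence; your closing remark that this is equivalent to citing Hopf--Rinow--Cohn--Vossen is exactly the paper's move. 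Two soft spots are worth tightening. First, your finiteness argument ``connect $p$ and $q$ to nearby interior points through g-convex balls'' glosses over the fact that a minimizing geodesic of $\mc{Q}$ from a boundary point of $\mc{M}$ to a nearby interior point need not remain inside $\closure{\mc{M}}$ (e.g., near a cusp), which is precisely why the paper routes this step through a continuous curve already known to lie in $\closure{\mc{M}}$ and smooths it chartwise. Second, $\dist$ is defined as an infimum over $\Cpw$ curves, while your Arzel\`a--Ascoli limit is only Lipschitz; the reverse inequality $L(\gamma)\ge\dist(p,q)$ therefore needs the standard but unstated fact that the infimum of length over rectifiable curves in $\closure{\mc{M}}$ agrees with the infimum over the piecewise-$\mc{C}^1$ class (and that $L$ of a Lipschitz curve is read as metric length). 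You correctly flag the semicontinuity issue and resolve it with the right tool, so with those two clarifications the argument stands.
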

\begin{proof}
	The proof follows by verifying that $\closure{\mc M}$ satisfies the preconditions of Theorem 2.5.23 of \citet[p.~50]{burago2022course}.
	We defer the details to \Cref{proof:burago}.
\end{proof}

\begin{assumption}
	We are given a finite atlas $\mc{A}=\set{(\mc{Y}_v,\psi_v)}_{v\in V}$ of $\mc{M}$.
	For each $v$, the closure $\closure{\mc{Y}}_v$ is g-convex as a subset of $\mc{Q}$.
	Furthermore, the union of the closures $\closure{\mc{Y}}_{v}$ covers $\closure{\mc{M}}$.
	\label{asm:mcYv}
\end{assumption}

These requirements will not hold in general, but we will discuss how to construct such an atlas in \Cref{subsec:atlas_construct}.
We can also extend each $\psi_v$ to be defined on $\closure{\mc{Y}}$.
With this information, we can prove a strong result about the shortest paths in $\closure{\mc{M}}$.

\begin{theorem}
	\textbf{(Piecewise Geodesic Optimal Paths)}
	Let $p,q\in\closure{\mc{M}}$, and suppose the sets $\closure{\mc Y}_v$ satisfy \Cref{asm:mcYv}.
	Then there is a curve $\gamma^*\in \Cpw([a,b],\closure{\mc{M}})$ connecting them, such that the following are true:
	\begin{itemize}
		\item $L(\gamma^*)=\dist(p,q)$
		\item $\gamma^*$ is a piecewise geodesic of $\mc{Q}$
		\item Each geodesic segment is contained in some $\closure{\mc{Y}}_{v}$
		\item $\gamma^*$ passes through each $\closure{\mc{Y}}_{v}$ at most once.
	\end{itemize}
	\label{thm:geodesic_segment}
\end{theorem}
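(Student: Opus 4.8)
The plan is to start from a genuinely optimal curve and upgrade it to a piecewise geodesic without changing its length. By \Cref{thm:burago}, there is a continuous $\gamma:[a,b]\to\closure{\mc M}$ with $L(\gamma)=\dist(p,q)$. I would first chop $[a,b]$ into finitely many subintervals so that the image of each lies inside a single g-convex closure $\closure{\mc Y}_v$, then replace the (possibly wiggly) restriction of $\gamma$ on each subinterval by the unique length-minimizing geodesic between its endpoints supplied by g-convexity of $\closure{\mc Y}_v$ (\Cref{asm:mcYv}). Concatenating these geodesic arcs produces a curve $\gamma^*$ that is piecewise geodesic, has every segment contained in some $\closure{\mc Y}_v$, and lies entirely in $\closure{\mc M}$; since each geodesic segment is smooth, $\gamma^*\in\Cpw([a,b],\closure{\mc M})$. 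This immediately targets the first three bullet points, leaving the ``at most once'' claim to a separate shortcutting argument.

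The subdivision is the step I expect to be the main obstacle. The image $K=\gamma([a,b])$ is compact, and the closures $\closure{\mc Y}_v$ cover $\closure{\mc M}\supseteq K$, but they are closed sets, and a finite closed cover of $[a,b]$ need not admit a subordinate partition (consider two closed sets whose preimages interleave infinitely toward an accumulation point). The resolution must use the structure of the atlas from \Cref{subsec:atlas_construct}: the overlapping g-convex sets are built so that the relative interiors of the $\closure{\mc Y}_v$ already cover $\closure{\mc M}$. Granting this, $\{\,\Int_{\closure{\mc M}}\closure{\mc Y}_v\,\}$ is a finite open cover of the compact set $K$, so it has a Lebesgue number $\delta>0$; by uniform continuity of $\gamma$, any partition $a=t_0<\cdots<t_N=b$ with mesh small enough that each $\gamma([t_{i-1},t_i])$ has diameter below $\delta$ places $\gamma([t_{i-1},t_i])$ inside a single $\closure{\mc Y}_{v_i}$. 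The boundary $\partial\mc M$, where only closures and not open chart domains are available, is exactly where care is needed, which is why the relative-interior cover produced by the construction is essential.

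Given the partition, the length bookkeeping is routine. On $[t_{i-1},t_i]$ the endpoints $\gamma(t_{i-1}),\gamma(t_i)$ lie in the g-convex set $\closure{\mc Y}_{v_i}$, so there is a unique length-minimizing geodesic $\sigma_i$ of $\mc Q$ joining them inside $\closure{\mc Y}_{v_i}$, with $L(\sigma_i)=\dist(\gamma(t_{i-1}),\gamma(t_i))\le L(\gamma|_{[t_{i-1},t_i]})$. Summing, $L(\gamma^*)=\sum_i L(\sigma_i)\le L(\gamma)=\dist(p,q)$; since $\gamma^*$ is itself an admissible curve from $p$ to $q$ in $\closure{\mc M}$, we also have $L(\gamma^*)\ge\dist(p,q)$, whence equality and optimality of $\gamma^*$. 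This confirms the first three bullets.

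Finally, to force each $\closure{\mc Y}_v$ to be visited at most once, I would run a merging argument on the label sequence $(v_1,\dots,v_N)$. If a label $w$ occurs at positions $i<j$, let $s_-$ be the start of segment $i$ and $s_+$ the end of segment $j$; both endpoints lie in $\closure{\mc Y}_w$, so replacing the entire sub-curve between them with the single minimizing geodesic of $\closure{\mc Y}_w$ does not increase the total length, and as the total can never drop below $\dist(p,q)$ it is in fact preserved, so $\gamma^*$ stays optimal and feasible. Choosing $i,j$ to be the first and last occurrences of $w$ collapses all of positions $i,\dots,j$ into one segment labeled $w$, strictly decreasing $\sum_v(\#\{\text{occurrences of }v\}-1)$ without creating new repeats of $w$. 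Iterating terminates with all segment labels distinct, giving the final bullet while retaining the piecewise-geodesic, in-closure, and length-optimality properties established above.
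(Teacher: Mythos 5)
Your outline (optimal curve from \Cref{thm:burago}, replace pieces by geodesics, then merge repeated labels) reaches the right conclusions, and your length bookkeeping and the termination argument for the merging step are both correct. The genuine gap is exactly where you suspected: the subdivision. A Lebesgue-number argument requires an \emph{open} cover of the compact image of $\gamma$, while \Cref{asm:mcYv} only provides a cover of $\closure{\mc M}$ by the \emph{closed} sets $\closure{\mc Y}_v$. Your proposed repair---that the interiors of the $\closure{\mc Y}_v$ relative to $\closure{\mc M}$ already cover $\closure{\mc M}$---is not among the theorem's hypotheses and can fail at boundary points of $\mc M$: let $\mc M$ be the open unit disk in $\R^2$ and take $\mc Y_1=\mc M\cap\set{x_1+x_2<1}$ and $\mc Y_2=\mc M\cap\set{x_1-x_2<1}$; these are open, convex, and cover $\mc M$, and their closures cover $\closure{\mc M}$, yet the boundary point $(1,0)$ lies in neither relative interior. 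Deferring to the IRIS-based construction of \Cref{subsec:atlas_construct} does not close the gap, since the theorem is stated (and used) for an arbitrary atlas satisfying \Cref{asm:mcYv}.

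The paper's proof avoids the subdivision altogether, and the mechanism is essentially your merging step promoted to the entire argument. It iterates once over the vertices in an arbitrary order; for each $v$ that the current curve meets, it sets $T=\set{t\setmid\gamma(t)\in\closure{\mc Y}_v}$, which is a nonempty compact subset of the parameter interval (closed preimage of a closed set under a continuous map), takes $a'=\min T$ and $b'=\max T$, and splices in the unique minimizing geodesic of $\closure{\mc Y}_v$ between $\gamma(a')$ and $\gamma(b')$. This cannot increase length, hence preserves it, and after one pass over the finitely many vertices the curve is piecewise geodesic with each segment in a single $\closure{\mc Y}_v$ and each set entered at most once---no Lebesgue number, no uniform continuity, and no assumption on interiors. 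If you drop the partition step and instead apply your merging idea directly to the continuous optimal curve, shortcutting between the first and last times it meets each $\closure{\mc Y}_v$, your proof goes through.
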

\begin{proof}
	Let $\gamma_0$ be a continuous minimizing path connecting $p$ to $q$ (guaranteed to exist by \Cref{thm:burago}); we will use this to construct an appropriate $\gamma^*$.
	Select an arbitrary order $v_1,\ldots,v_{\card{V}}$ to iterate over all of the vertices in $V$.
	We will construct a sequence of curves $\gamma_1,\ldots,\gamma_{\card{V}}$, such that $\gamma_{\card{V}}$ has the desired properties.

	For each $i$, if $\gamma_{i-1}$ does not pass through $\closure{\mc{Y}}_{v_i}$, let $\gamma_i=\gamma_{i-1}$.
	Otherwise, let $T_i=\set{t\setmid \gamma_{i-1}(t)\in\closure{\mc{Y}}_{v_{i}}}$, let $a_i'=\min(T_i)$, and let $b_i'=\max(T_i)$.
	Then by the g-convexity of $\closure{\mc{Y}}_{v_i}$, there is a unique minimizing geodesic $\alpha_i:[a_i',b_i']\to\closure{\mc{Y}}_{v_i}$ connecting $\gamma_{i-1}(a_i')$ and $\gamma_{i-1}(b_i')$.
	Let $\tilde\gamma$ be a new curve, defined by
	\begin{equation}
		\tilde\gamma(t)=\begin{cases}
			\gamma_{i-1}(t) & t\not\in[a_i',b_i']\\
			\alpha_i(t) & t\in[a_i',b_i']
		\end{cases}
		\label{eq:better_curve}
	\end{equation}
	Because $L(\alpha_i)\le L(\restr{\gamma_{i-1}}_{[a',b']})$, we have $L(\tilde\gamma)\le L(\gamma_{i-1})$, and since $\gamma_{i-1}$ is of minimum length, we must have $L(\tilde\gamma)=L(\gamma_{i-1})$.
	Define $\gamma_{i}$ to be $\tilde\gamma$, and continue until we have iterated over all of the $v\in V$.
	Then by construction, $L(\gamma_{\card{V}})=\dist(p,q)$, $\gamma_{\card{V}}$ is piecewise geodesic in $\mc{Q}$, each geodesic segment is contained in some $\closure{\mc{Y}}_{v}$, and $\gamma_{\card{V}}$ passes through each $\closure{\mc{Y}}_{v}$ at most once.
\end{proof}

\subsection{Formulation as a GCS Problem}
\label{subsec:formulation_as_gcs}

To transform the GGCS problem into a GCS problem, we require that the sets and edge costs are convex in Euclidean space.
The following is sufficient (and still encompasses robots with mobile bases and continuous revolute joints):

\begin{assumption}
	$\mc{Q}$ is flat.
	Also, each $\psi_v$ is a local isometry into Euclidean space, viewed as a Riemannian manifold with the canonical Euclidean metric.
	\label{asm:flat}
\end{assumption}

\Cref{asm:mcYv,asm:flat} together yield three important results:
\begin{itemize}
	\item $\mc{X}_v=\psi_v(\closure{\mc{Y}}_v)$ is convex.
	\item $\forall y_0,y_1\in\closure{\mc{Y}}_v$, $\dist(y_0,y_1)=\norm{\psi_v(y_0)-\psi_v(y_1)}_{2}$
	\item $\tau_{u,v}$ is a Euclidean isometry (see \Cref{lemma:off_by_an_isometry} in \Cref{proof:convex_product}), and hence affine~\cite{vaisala2003proof}. 
\end{itemize}

The first two results are true because $\mc{Y}_v$ is g-convex, $\psi_v$ maps geodesics to geodesics~\cite[p.~125]{lee2018introduction}, and geodesics are unique in Euclidean space.
For most robotic configuration spaces we consider, $\mc{Q}$ can be decomposed as the product of one-dimensional manifolds.
In this case, the coordinate systems can be globally aligned, so that every transition map is a translation.

To formulate the problem with GCS, we follow an approach similar to~\cite{marcucci2022motion}, where decision variables describe line segments contained within each convex set.
In particular, $\forall v\in V$, we have $x_v=(x_{v,0},x_{v,1})\in\mc{X}_v^2$, where $x_{v,0}$ is the start point of the line segment, and $x_{v,1}$ is the end point.
For each edge $e=(u,v)\in E$, the length of the segment associated with the starting vertex is used as the edge cost:
\begin{equation}
	\ell_e(x_u,x_v)=\dist(\psi_u\inv(x_{u,0}),\psi_u\inv(x_{u,1}))=\norm{x_{u,0}-x_{u,1}}_{2}
	\label{eq:edge_cost}
\end{equation}
We also encode equality constraints to ensure the endpoints of adjacent segments are in agreement:
\begin{equation}
	\psi_u\inv(x_{u,1})=\psi_v\inv(x_{v,0})
	\quad\Leftrightarrow\quad
	\tau_{u,v}(x_{u,1})=x_{v,0}
	\label{eq:edge_constraint}
\end{equation}
This constraint is convex because $\tau_{u,v}$ is affine.
Thus, we have a valid GCS formulation, which can be solved as a mixed-integer convex program.
Alternatively, it can be solved approximately by solving the convex relaxation and using a randomized rounding strategy~\cite{marcucci2022motion}.
If we have $p\in \mc{X}_{0}$ and $q\in \mc{X}_{K}$, then after solving the GCS problem, we obtain a path
\begin{equation}
	x_\pi=(x_{0,0},x_{0,1},x_{1,0},x_{1,1},\ldots,x_{K,0},x_{K,1})
	\label{eq:euclidean_path}
\end{equation}
with $x_{0,0}=\psi_0(p)$, $x_{K,1}=\psi_K(q)$, and $\psi_i(x_{i,1})=\psi_{i+1}(x_{i+1,0})$, $\forall i\in\set{1,\ldots,K-1}$.
Such a path naturally lifts to a path on $\closure{\mc{M}}$:
\begin{align}
	\label{eq:manifold_path}
	y_\pi & =(y_0=p,y_1,y_2,\ldots,y_K,y_{K+1}=q)\\
	& =(\psi_0\inv(x_{0,0}),\psi_1\inv(x_{1,0}),\ldots,\psi_K\inv(x_{K,0}),\psi_K\inv(x_{K,1}))\nonumber
\end{align}
where we have removed duplicate points from the trajectory.
This process is visualized for a simple cylinder manifold in \Cref{fig:cylinder-example}.

\begin{figure}
	\centering
	\includegraphics[width=\linewidth]{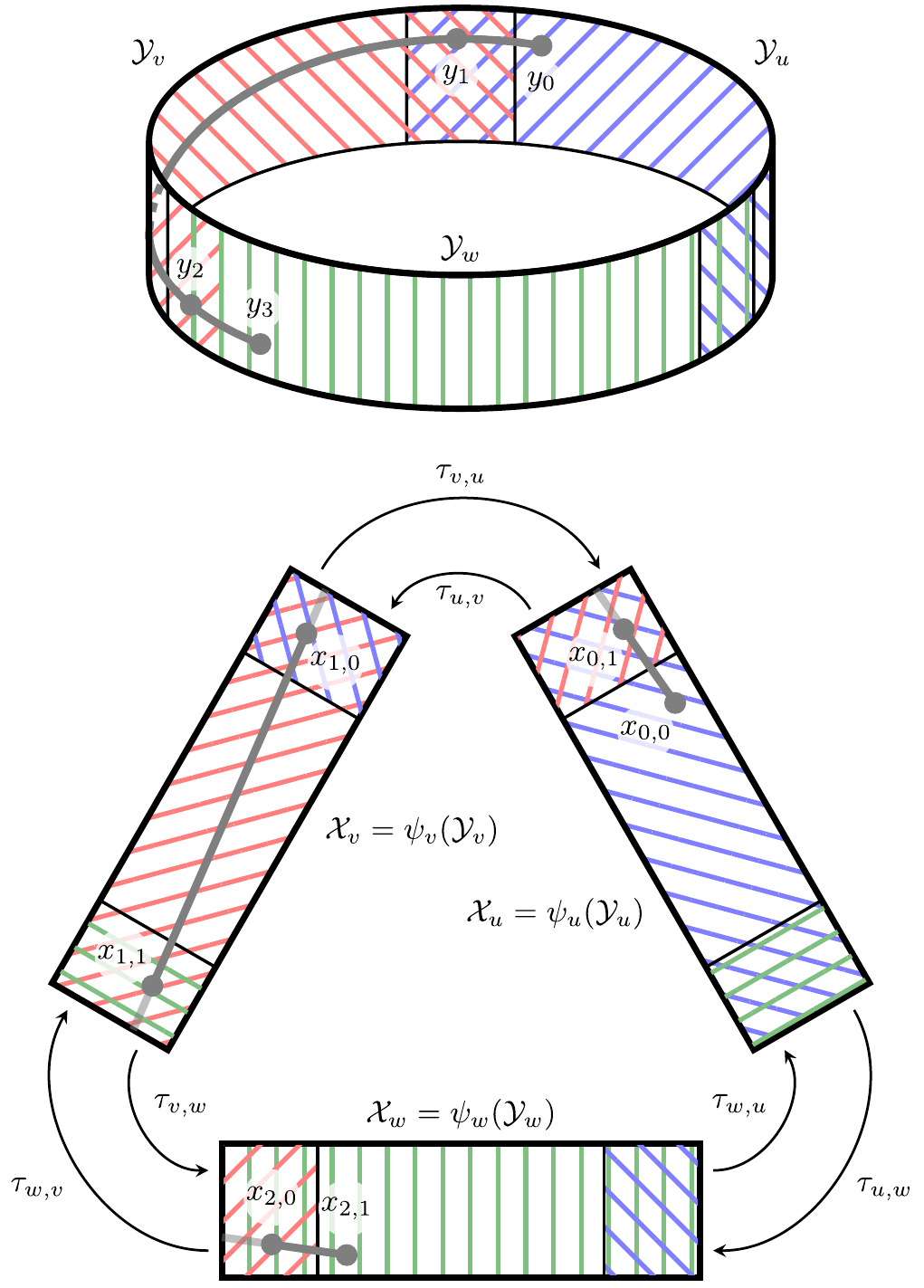}
	\caption{
		The process of transforming a GGCS problem into a GCS problem for a simple cylinder manifold.
		Each of the three charts maps to a Euclidean space, with transition maps encoding the equality constraints across chart domains.
		The line segments then lift to a piecewise geodesic on the manifold.
	}
	\label{fig:cylinder-example}
\end{figure}

For each $i\in\set{0,\ldots,K}$, $y_i$ and $y_{i+1}$ are contained in a g-convex set $\closure{\mc{Y}}_i$, so there is a unique minimizing geodesic $\gamma_i$ connecting them and completely contained in $\closure{\mc{Y}}_i$.
Thus, a path $y_\pi$ uniquely defines a piecewise geodesic $\gamma_\pi$ connecting $p$ to $q$ that is completely contained in $\closure{\mc{M}}$.
With this fact, we can formally prove the equivalence of the GGCS problem and the GCS problem.
\begin{theorem}
	\textbf{(Proof of Problem Equivalence)}
	If the path $x_\pi$ given in \Cref{eq:euclidean_path} is optimal for the GCS problem defined by \Cref{eq:edge_cost,eq:edge_constraint}, then the piecewise geodesic $\gamma_\pi$ defined by \Cref{eq:manifold_path} is optimal for Problem~\eqref{eq:motion_planning_formulation}.
	\label{thm:problem_equivalence}
\end{theorem}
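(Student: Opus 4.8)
The plan is to show that the Euclidean GCS problem defined by \Cref{eq:edge_cost,eq:edge_constraint} and the Riemannian Problem~\eqref{eq:motion_planning_formulation} share the same optimal value, by exhibiting a length-preserving correspondence between GCS-feasible paths and piecewise-geodesic trajectories on $\closure{\mc M}$, and then reading off optimality of $\gamma_\pi$. Concretely, I would prove that the optimal GCS cost is both at least and at most $\dist(p,q)$, and then conclude that an optimal $x_\pi$ must lift to a curve $\gamma_\pi$ of length exactly $\dist(p,q)$.

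For the lower bound, I would take an arbitrary GCS-feasible $x_\pi$ and lift it to the piecewise geodesic $\gamma_\pi$ of \Cref{eq:manifold_path}. The transition constraints \eqref{eq:edge_constraint} guarantee that consecutive geodesic segments share an endpoint on the manifold, so $\gamma_\pi$ is a continuous element of $\CpwM$ joining $p$ to $q$, i.e.\ feasible for Problem~\eqref{eq:motion_planning_formulation}. The key identity is $L(\gamma_\pi)=\sum_e \ell_e(x_u,x_v)$: under \Cref{asm:flat} each $\psi_v$ is a local isometry onto a convex Euclidean set, so each geodesic segment contained in $\closure{\mc Y}_v$ has Riemannian length equal to the Euclidean length $\norm{x_{v,0}-x_{v,1}}_2$ of its chart image, which is exactly the edge cost \eqref{eq:edge_cost}. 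Since $\gamma_\pi$ is feasible, $L(\gamma_\pi)\ge\dist(p,q)$, so the optimal GCS cost is at least $\dist(p,q)$.

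The reverse inequality is the crux, and is where \Cref{thm:geodesic_segment} does the heavy lifting. That theorem supplies a length-minimizing $\gamma^*$ with $L(\gamma^*)=\dist(p,q)$ that is piecewise geodesic in $\mc Q$, has each segment contained in some $\closure{\mc Y}_v$, and visits each $\closure{\mc Y}_v$ at most once. I would read off the ordered sequence of sets traversed by $\gamma^*$; because it visits each at most once these vertices are distinct, and because $\gamma^*$ is continuous, at each junction the shared point lies in the intersection of two consecutive closures, so the corresponding edge is present in $G$ (assuming, as the atlas construction of \Cref{subsec:atlas_construct} ensures, that overlapping sets are adjacent). Charting each segment through the appropriate $\psi_v$ yields line segments $x_{v,0},x_{v,1}$ lying in the convex sets $\mc X_v$, and the junction identities translate into the affine constraints \eqref{eq:edge_constraint} via $\tau_{u,v}=\psi_v\of\psi_u\inv$. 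This produces a GCS-feasible path whose cost equals $L(\gamma^*)=\dist(p,q)$, so the optimal GCS cost is at most $\dist(p,q)$.

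Combining the two bounds, the optimal GCS cost equals $\dist(p,q)$. Since $x_\pi$ is assumed optimal, the length identity gives $L(\gamma_\pi)=\sum_e \ell_e(x_u,x_v)=\dist(p,q)$; as $\gamma_\pi$ is feasible for Problem~\eqref{eq:motion_planning_formulation}, it attains the infimum and is therefore optimal. I expect the main obstacle to be the reverse direction: carefully arguing that the abstract piecewise geodesic from \Cref{thm:geodesic_segment} can be re-indexed as a genuine path in $G$ with distinct, edge-connected vertices whose segments chart to admissible decision variables, and in particular checking the boundary bookkeeping for the segments incident to the source and target so that the edge-cost sum \eqref{eq:edge_cost} accounts for the full length of $\gamma^*$.
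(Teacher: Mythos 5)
Your proposal matches the paper's proof essentially step for step: both directions are established via the length-preserving correspondence between GCS-feasible paths and piecewise geodesics (using that each $\psi_v$ is a local isometry), with \Cref{thm:geodesic_segment} supplying the piecewise-geodesic optimal trajectory that charts to a feasible GCS path of equal cost. The bookkeeping concerns you flag at the end (edge existence between consecutive sets, boundary segments) are in fact passed over just as lightly in the paper's own argument, so there is nothing further to add.
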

\begin{proof}
	Any feasible path $x_\pi$ for the GCS problem yields a piecewise continuously differentiable curve $\gamma_\pi$ whose image is contained in $\closure{\mc{M}}$ and connecting $p$ to $q$.
	Then the length of this curve satisfies
	\[
		L(\gamma_\pi)=\sum_{i=0}^{K}\dist(y_i,y_{i+1})=\sum_{i=0}^{K}\norm{x_{i,0}-x_{i,1}}_2=\ell_{\pi}(x_{\pi})
	\]
	Thus, the optimal value of Problem~\eqref{eq:motion_planning_formulation} is no worse than the optimal value of the GCS problem.

	Now, consider an optimal $\gamma^*$ for Problem~\eqref{eq:motion_planning_formulation}, with the properties of \Cref{thm:geodesic_segment}.
	Then $\gamma^*$ is the concatenation of geodesics $\gamma_1,\ldots,\gamma_K$, where $\gamma_i:[0,1]\to\closure{\mc{Y}}_{v_{i}}$ for $i=1,\ldots,K$, and each $v_i$ is distinct.
	Define $x_\pi$ by
	\[
		(x_{i,0},x_{i,1})=\paren{\psi_{i}(\gamma_{i}(0)),\psi_{i}(\gamma_i(1))}
	\]
	$\forall i\in[K]$.
	By construction, $\ell_\pi(x_\pi)=L(\gamma^*)$.
	$\gamma_i(1)=\gamma_{i+1}(0)$, and the $v_i$ are distinct, so $x_\pi$ is feasible for the GCS problem.
	Thus, the GCS problem achieves the optimal value of Problem~\eqref{eq:motion_planning_formulation}.
\end{proof}

\subsection{Construction of the Atlas}
\label{subsec:atlas_construct}

A key part of motion planning with GGCS is the construction of an appropriate atlas $\mc{A}=\set{(\mc{Y}_{v},\psi_v)}_{v\in V}$ of $\mc{M}$.
Recall that $\mc{A}$ must be finite, each $\closure{\mc{Y}}_{v}$ must be g-convex, and each $\psi_v$ must be a local isometry.

As was done in~\cite{marcucci2022motion}, we construct an inner approximation of C-Free using the extension of IRIS~\cite[Alg. 2]{amice2023finding} to handle nonconvex obstacles.
Given a seed point in $\mc{Q}$, we grow a region about that point with respect to a local coordinate system.
In this way, we grow the region in Euclidean space, but we have an implicit mapping to the manifold, allowing us to construct the transition maps needed for \Cref{eq:edge_constraint}.

To ensure the set is g-convex when lifted to $\mc{Q}$, we bound the region by the convexity radius on a per-joint basis.
If $r_i$ is the convexity radius of the $i$th joint's configuration space, we constrain that joint to take values within an open ball of radius $r_i$, centered at the seed point.
(Computationally, we use a closed ball of radius $r_i-\epsilon$, with a small $\epsilon>0$.)
For a 1DoF joint, this is just the interval $[x-r_i+\epsilon,x+r_i-\epsilon]$ for seed point $x$.
If the manifold is flat, this guarantees g-convexity (see proof in \Cref{proof:convex_product}).

\begin{theorem}
	Suppose $\mc{Q}=\mc{Q}_1\times\cdots\times\mc{Q}_m$, where each $\mc{Q}_i$ has a convexity radius $r_i$.
	Let $(\mc{Y},\psi)$ be a coordinate chart, with $\psi$ a local isometry and $\psi(\mc{Y})$ convex in Euclidean space.
	If $\mc{Q}$ is flat and the diameter of $\proj_{\mc{Q}_i}(\mc{Y})$ is less than $2r_i$, then $\mc{Y}$ is g-convex.
	\label{thm:convex_product}
\end{theorem}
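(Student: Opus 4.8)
The plan is to verify the two defining properties of strong g-convexity directly: that any $p,q\in\mc{Y}$ are joined by a \emph{unique} length-minimizing geodesic of $\mc{Q}$, and that this geodesic lies entirely in $\mc{Y}$. I would establish existence and uniqueness of the minimizer using the product structure together with the per-component diameter bound, and then handle containment by lifting the Euclidean straight segment through the chart $\psi$ and showing that this lift coincides with the minimizer.

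For existence and uniqueness, write $p=(p_1,\dots,p_m)$ and $q=(q_1,\dots,q_m)$, so that $p_i,q_i\in\proj_{\mc{Q}_i}(\mc{Y})$ and hence $\dist(p_i,q_i)\le\diameter(\proj_{\mc{Q}_i}(\mc{Y}))<2r_i$. Since $\mc{Q}$ (and therefore each $\mc{Q}_i$) is complete, a minimizing geodesic $\gamma_i$ from $p_i$ to $q_i$ exists; taking its midpoint $m_i$ gives $\dist(m_i,p_i)=\dist(m_i,q_i)<r_i$, so $p_i$ and $q_i$ both lie in a ball $B_\rho(m_i)$ with $\rho<r_i$, which is g-convex by definition of the convexity radius. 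Thus $\gamma_i$ is the \emph{unique} minimizing geodesic of $\mc{Q}_i$ between $p_i$ and $q_i$. Because geodesics of a Riemannian product are exactly products of geodesics of the factors, and the energy functional splits as $E(\sigma)=\sum_i E(\sigma_i)$, the unique energy minimizer---equivalently, the unique constant-speed minimizing geodesic---of $\mc{Q}$ is $\gamma=(\gamma_1,\dots,\gamma_m)$, giving the required existence and uniqueness in $\mc{Q}$.

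For containment, I use convexity of $\psi(\mc{Y})$ and the fact that $\psi$ is a flat local isometry. The Euclidean segment $\sigma$ from $\psi(p)$ to $\psi(q)$ lies in $\psi(\mc{Y})$; since $\psi\inv$ is a local isometry and carries the line segment (a Euclidean geodesic) to a geodesic, the lift $\tilde\gamma:=\psi\inv(\sigma)$ is a geodesic of $\mc{Q}$ from $p$ to $q$ whose image lies in $\mc{Y}$. It remains to show $\tilde\gamma=\gamma$, after which $\gamma\subseteq\mc{Y}$ and the proof is complete. Writing $\tilde\gamma=(\tilde\gamma_1,\dots,\tilde\gamma_m)$, each $\tilde\gamma_i$ is a geodesic of $\mc{Q}_i$ joining $p_i$ to $q_i$ with image contained in $\proj_{\mc{Q}_i}(\mc{Y})$, hence of diameter $<2r_i$, so the goal reduces to the per-component claim $\tilde\gamma_i=\gamma_i$.

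The main obstacle is precisely this per-component identification, since \emph{a priori} $\tilde\gamma_i$ could be a non-minimizing geodesic with the same endpoints. I would resolve it using the standard bound that in the flat setting the convexity radius is at most half the injectivity radius, so $2r_i\le\iota(p_i)$, where $\iota(p_i)$ is the injectivity radius at $p_i$. Since a geodesic is minimizing up to the injectivity radius, for every $t\le\iota(p_i)$ the initial arc of (arc-length reparametrized) $\tilde\gamma_i$ satisfies $\dist(p_i,\tilde\gamma_i(t))=t$; but this distance is at most the image diameter $<2r_i\le\iota(p_i)$, which forces $L(\tilde\gamma_i)<\iota(p_i)$. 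Both $\tilde\gamma_i$ and $\gamma_i$ are then geodesics issuing from $p_i$ with initial velocities of norm below $\iota(p_i)$ that reach $q_i$, so by injectivity of $\exp_{p_i}$ on the ball of radius $\iota(p_i)$ these velocities coincide and $\tilde\gamma_i=\gamma_i$. Hence $\tilde\gamma=\gamma\subseteq\mc{Y}$, proving that $\mc{Y}$ is g-convex. The one step demanding care is the length bound $L(\tilde\gamma_i)<\iota(p_i)$, which is exactly where the hypothesis $\diameter(\proj_{\mc{Q}_i}(\mc{Y}))<2r_i$, rather than any weaker assumption, is essential.
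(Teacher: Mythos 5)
Your proof is correct, but it takes a genuinely different route from the paper's. The paper never reasons about individual geodesics between points of $\mc{Y}$: it uses the diameter bound to enclose $\mc{Y}$ in a single ambient g-convex set $\mc{Z}=\prod_{i}B_{s_i}(c_i)$ with $s_i<r_i$, takes Riemannian normal coordinates $\varphi$ at the common center (by flatness, an isometric chart on all of $\mc{Z}$), invokes a rigidity lemma (a local isometry between connected open subsets of Euclidean space is the restriction of a global Euclidean isometry, so $\varphi(\mc{Y})=\xi(\psi(\mc{Y}))$ is convex), and finishes with the observation that a set which is convex in an isometric chart defined on a g-convex superset is itself g-convex. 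You instead verify strong g-convexity pairwise: uniqueness of the minimizer via the midpoint trick in each factor together with the splitting of the energy functional, and containment by lifting the Euclidean segment through $\psi\inv$ and identifying the lift with the minimizer. The trade-off is where the nontrivial input lives. The paper leans on the Liouville-type rigidity of flat isometric charts and on fitting a set of diameter less than $2r_i$ inside a ball of radius less than $r_i$ (your per-pair midpoint argument actually sidesteps the circumradius-versus-diameter subtlety hidden in that step), whereas you lean on the inequality $2r_i\le\iota(p_i)$ between the convexity and injectivity radii. That inequality is true, but it is not as innocuous as ``the standard bound'' suggests: in full generality it is a theorem in its own right, so you should either cite it explicitly or prove it for the flat (or one-dimensional-factor) case relevant here, since it is the crux of your identification $\tilde\gamma_i=\gamma_i$. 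With that reference supplied, your argument is complete and, if anything, more self-contained at the level of individual geodesics than the paper's chart-based one.
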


We also assumed full coverage of $\closure{\mc{M}}$ by the union of the $\mc{Y}_{v}$.
In scenarios where we only have an inner approximation of C-Free, we treat all points outside of that approximation as obstacles.
Thus, our planner finds the globally optimal path within ``certified'' C-Free, which is a subset of the whole C-Free.

\subsection{More General Motion Planning}
\label{subsec:further_motion_planning}

\Citet{marcucci2022motion} extended GCS-Planner to parametrize trajectories as piecewise \bz{} curves, in order to handle a greater variety of costs and constraints.
This includes penalizing the duration and energy of a trajectory, adding velocity bounds, and requiring the trajectory to be differentiable a certain number of times.
\bz{} curves generalize naturally to Riemannian manifolds by interpolating along the minimizing geodesics between control points~\cite{park1995bezier,popiel2007bezier}.
Because we restrict ourselves to flat manifolds, the local geometry is unchanged from Euclidean space.
Thus, all costs and constraints that operate on individual segments of the piecewise \bz{} curve trajectory can be used with no changes.

To enforce the differentiability of the overall trajectory where two segments connect, we must compare tangent vectors across different coordinate systems.
In particular, suppose we need differentiability $\eta$ times for an edge $(i,j)$, with transition map $\tau_{i,j}$.
Let $\gamma_i$ and $\gamma_j$ be adjoining \bz{} curve segments in $\mc{Y}_i$ and $\mc{Y}_j$, and let their $k$th derivatives be $\nu_i^{(k)}$ and $\nu_j^{(k)}$ at the point where they connect, called $w$.
Using the pushforward of the transition map at $w$, this constraint can be written as
\begin{equation}
	(\tau_{i,j})_{*,\psi_i\inv(w)}\paren{\nu_i^{(k)}}=\nu_j^{(k)}\qquad\forall l\in[\eta]
	\label{eq:higher_order_continuity}
\end{equation}
Because the transition map is a Euclidean isometry, its pushforward is a linear transformation described by an orthogonal matrix, and if the coordinate systems are globally aligned (as described in \Cref{subsec:formulation_as_gcs}), then the pushforward is the identity map.
When $\mc{Q}$ is flat, the derivative of a \bz{} curve is a linear expression of its control points, so \Cref{eq:higher_order_continuity} is a convex constraint.

\subsection{Positive Curvature Induces Nonconvexity}
\label{subsec:positive_curvature_nonconvexity}

At this point, we have shown that the flatness of $\mc{Q}$ is sufficient for our formulation's validity.
It is natural to ask how essential this is, especially since $\SO(3)$, a manifold of great interest in robotics, has positive curvature.
For example, the configuration space of a ball joint is a subset of $\SO(3)$, and the configuration space of a free moving object in $\R^3$ is $\SE(3)\cong\SO(3)\times\R^3$.
Unfortunately, even a single point of positive curvature implies that the Riemannian distance function is not g-convex, even on arbitrarily small neighborhoods of that point (see proof in \Cref{proof:positive_nonconvexity}).
\begin{theorem}
	Let $\mc{M}$ be a Riemannian manifold, let $A_1\in\mc{M}$ and $u,v\in T_{A_1}\mc{M}$ such that $\mc{K}(u,v)>0$.
	Then for any neighborhood $\mc{U}$ containing $A_1$, $\dist:\mc{M}^2\to\R$ is nonconvex on $\mc{U}^2$.
	\label{thm:positive_nonconvexity}
\end{theorem}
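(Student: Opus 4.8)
The plan is to exhibit, inside any neighborhood $\mc{U}$ of the positively-curved point $A_1$, a single geodesic segment in the product $\mc{M}^2$ along which the distance function $\dist$ violates the convexity inequality. Since convexity of a function on a manifold means convexity along every geodesic, I only need to produce one geodesic $\Gamma:[0,1]\to\mc{U}^2$ and one interior parameter $t$ for which $\dist(\Gamma(t)) > (1-t)\dist(\Gamma(0)) + t\,\dist(\Gamma(1))$.

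**Construction**

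The natural candidate is a geodesic in $\mc{M}^2$ that holds the first coordinate fixed at $A_1$ and moves the second coordinate along a geodesic through $A_1$. Concretely, first I would pick a short geodesic triangle near $A_1$: let the second coordinate trace a geodesic $\sigma(s)$ that starts and ends near $A_1$, in directions spanned by $u$ and $v$. The distance function then reduces to $s\mapsto \dist_{\mc{M}}(A_1,\sigma(s))$, i.e. the distance from a fixed point to a moving point along a geodesic. The key geometric fact I would invoke is the \emph{law of cosines comparison} (the Toponogov / Rauch comparison theorem): in a space of positive sectional curvature, geodesic triangles are ``fatter'' than Euclidean ones, so the side opposite a given angle is \emph{shorter} than its Euclidean counterpart. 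Equivalently, the function measuring distance to a fixed vertex, restricted to the opposite side, bulges above the chord — precisely a failure of convexity. I expect the clean way to package this is via the second variation of arc length: along a geodesic $\sigma$, the Hessian of $p\mapsto \tfrac12\dist_{\mc{M}}(A_1,p)^2$ (or of $\dist$ itself) picks up a curvature term of the form $-\mc{K}(u,v)\cdot(\text{something positive})$, which is strictly negative when $\mc{K}(u,v)>0$. This negative Hessian at a point arbitrarily close to $A_1$ forces nonconvexity on $\mc{U}^2$, no matter how small $\mc{U}$ is, because curvature is a pointwise (infinitesimal) quantity.

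**Main Obstacle**

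The hard part will be handling the behavior of $\dist_{\mc{M}}$ near the diagonal $\{(A_1,A_1)\}$, where $\dist$ is not smooth (its square is smooth, but $\dist$ itself has a conical singularity at coincident points). To avoid computing the Hessian of a nonsmooth function at its singular locus, I would keep the fixed vertex $A_1$ and the moving geodesic $\sigma$ a small positive distance apart, so that all relevant points stay in a region where $\dist_{\mc{M}}(A_1,\cdot)$ is smooth (inside the cut locus), and take the curvature-induced second-variation estimate there. A secondary technical point is that sectional curvature is a continuous function on the Grassmann bundle, so $\mc{K}>0$ at the $2$-plane $\vspan(u,v)$ persists on a neighborhood; this lets me choose the comparison triangle genuinely inside $\mc{U}$ with the curvature term uniformly bounded below by a positive constant. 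Once the strict negativity of the second variation is established at some interior point, the violation of the convexity inequality $\dist(\Gamma(t)) \le (1-t)\dist(\Gamma(0)) + t\,\dist(\Gamma(1))$ follows by integrating, completing the argument.
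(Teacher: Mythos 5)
There is a genuine gap here, and it is in the core construction rather than the technical caveats. The geodesic of $\mc{M}^2$ you propose --- first coordinate frozen at $A_1$, second coordinate moving along $\sigma$ --- cannot witness nonconvexity when $\mc{U}$ is small. Along such a curve the objective reduces to $s\mapsto\dist_{\mc{M}}(A_1,\sigma(s))$, and the distance to a \emph{fixed} point is geodesically convex on any sufficiently small ball regardless of the sign of the curvature (this is essentially why convexity radii exist at all). Quantitatively, the Hessian comparison theorem gives, on the orthogonal complement of the radial direction, $\operatorname{Hess}\dist(A_1,\cdot)\approx\paren{\tfrac{1}{r}-\tfrac{\kappa r}{3}}g_{\perp}$ at distance $r$ from $A_1$ under a curvature bound $\kappa$: the curvature correction you identify is real, but it is dominated by the positive Euclidean term $1/r$ until $r$ is of order $\pi/(2\sqrt{\kappa})$ --- a large-scale phenomenon (a quarter of the way around a sphere) that cannot occur inside an arbitrarily small $\mc{U}$. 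For the same reason the Toponogov picture is misleading here: small geodesic triangles in positive curvature are fatter than their Euclidean comparisons only by $O(\ell^{3})$, which is not enough to push the distance-to-a-vertex function above the chord along the opposite side; it still dips below it, exactly as in the plane.

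The missing idea, and what the paper does, is to move \emph{both} arguments simultaneously so that the Euclidean contribution to the second variation cancels exactly. Set $B_1=\exp_{A_1}(\epsilon v)$, parallel transport $u$ along that geodesic to obtain $u'$ at $B_1$, and take the product geodesic $\gamma(t)=\paren{\exp_{A_1}(tu),\exp_{B_1}(tu')}$ for $t\in[-\epsilon,\epsilon]$. This traces out two Levi-Civita parallelogramoids; in flat space the distance between the two moving points would be exactly constant, so the curvature term becomes the \emph{leading} contribution rather than a lower-order correction. Cartan's expansion $\dist(A_2,B_2)^2=\dist(A_1,B_1)^2+\tfrac{8}{3}\iprod{\mc{R}(\epsilon u,\epsilon v)\epsilon u,\epsilon v}+O(\epsilon^5)$ then shows the distance is strictly smaller at both endpoints of $\gamma$ than at $t=0$ whenever $\mc{K}(u,v)>0$, so $\dist\of\gamma$ has an interior local maximum and convexity fails on $\mc{U}^2$ for every neighborhood $\mc{U}$. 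Your secondary points (staying off the diagonal where $\dist$ is nonsmooth, continuity of sectional curvature) are sensible but do not rescue the main construction.
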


\section{Experiments}
\label{sec:experiments}

We demonstrate our GGCS planner on various robotic platforms.
We present illustrative toy examples of planning for a point robot on a toroidal world and an arm in the plane with multiple continuous revolute joints.
We also build plans for a KUKA iiwa arm (with the base joint modified to be continuous revolute) and a PR2 bimanual mobile manipulator, implemented in Drake~\cite{tedrake2019drake}. 
We make interactive recordings of these trajectories available online at our \href{https://ggcs-anonymous-submission.github.io/}{results website}.
For each experiment, we explicitly state the configuration space, using $\mc{I}$ to refer to a general bounded interval in $\R$.

\subsection{Point Robot}
\label{sec:experiments:point_robot}

Consider a point robot moving about a toroidal world (configuration space $\mb{T}^2$, modeled as a unit square with the edges identified), with convex planar obstacles.
It is easy to visualize the obstacles, g-convex sets, and graph edges, as shown in \Cref{fig:point-robot}.
We also show an example of an optimal trajectory produced by GGCS-Planner, which ``wraps around'' the toroidal world.
This plan was computed in $0.79$ seconds.

\begin{figure}
	\centering
	\includegraphics[width=0.8\linewidth]{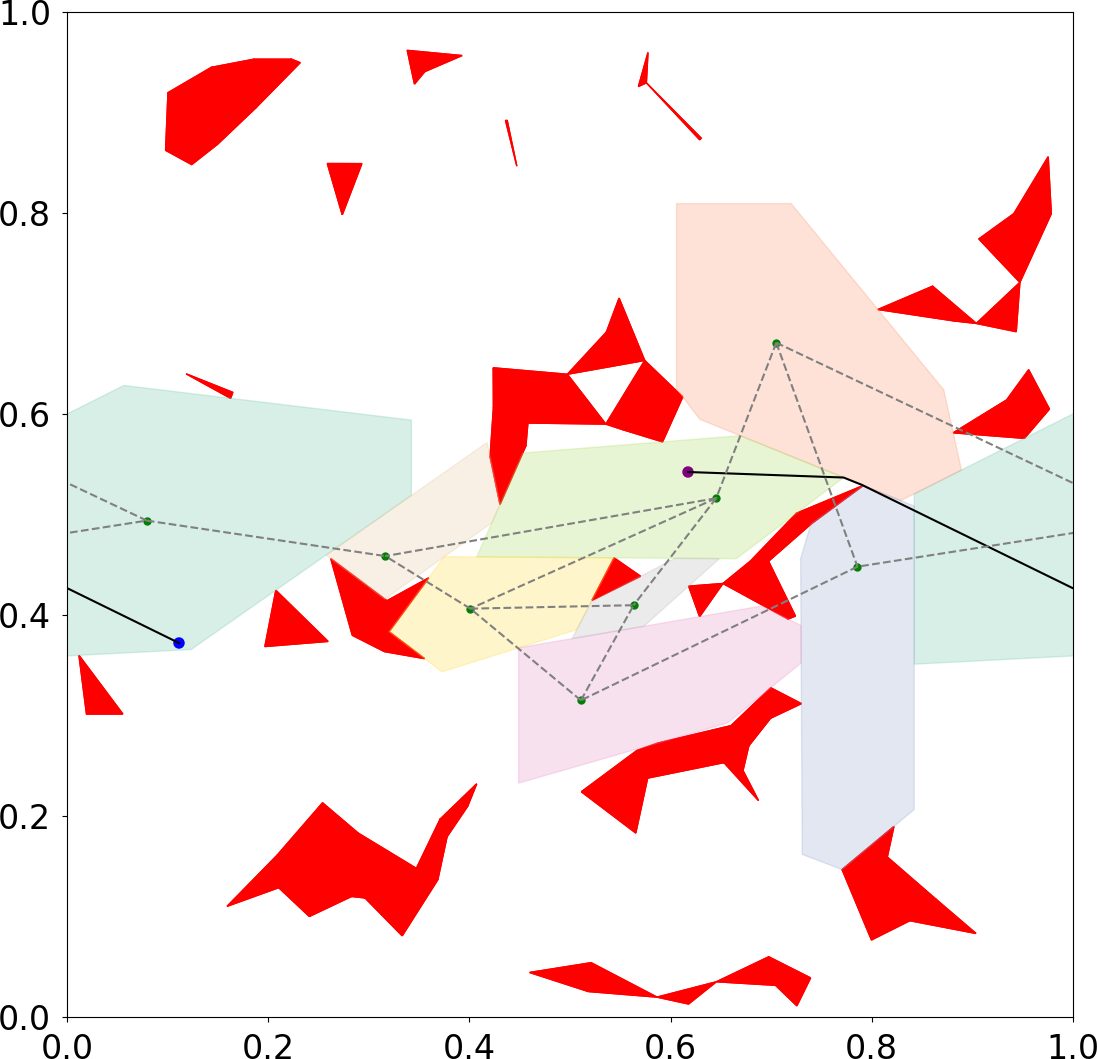}
	\caption{
		Results for a point robot in a toroidal world, realized as a unit square with opposite edges identified.
		Obstacles are shown in red, and each IRIS region is given a distinct pastel color.
		Note that one of the regions ``wraps around'' along the horizontal dimension, connecting opposite sides of the world.
		Grey dashed lines indicate which regions overlap.
		The optimal path between the start and end points is shown in black.
	}
	\label{fig:point-robot}
\end{figure}

\subsection{Planar Arm}
\label{sec:experiments:planar_arm}

Consider a robot arm with a fixed base, composed of five continuous revolute joints (configuration space $\mb{T}^5$), moving through a planar workspace with convex obstacles.
(We assume the arm does not suffer from self-collisions.)
We present sample plans produced by GGCS-Planner in \Cref{fig:planar-arm}, together with the swept collision volumes.
These two plans were found in $5.36$ and $4.63$ seconds, respectively.
A video of these trajectories is available at our \href{https://ggcs-anonymous-submission.github.io/}{results website}.

\begin{figure}
	\centering
	\begin{subfigure}[b]{\linewidth}
		\centering
		\includegraphics[width=\linewidth]{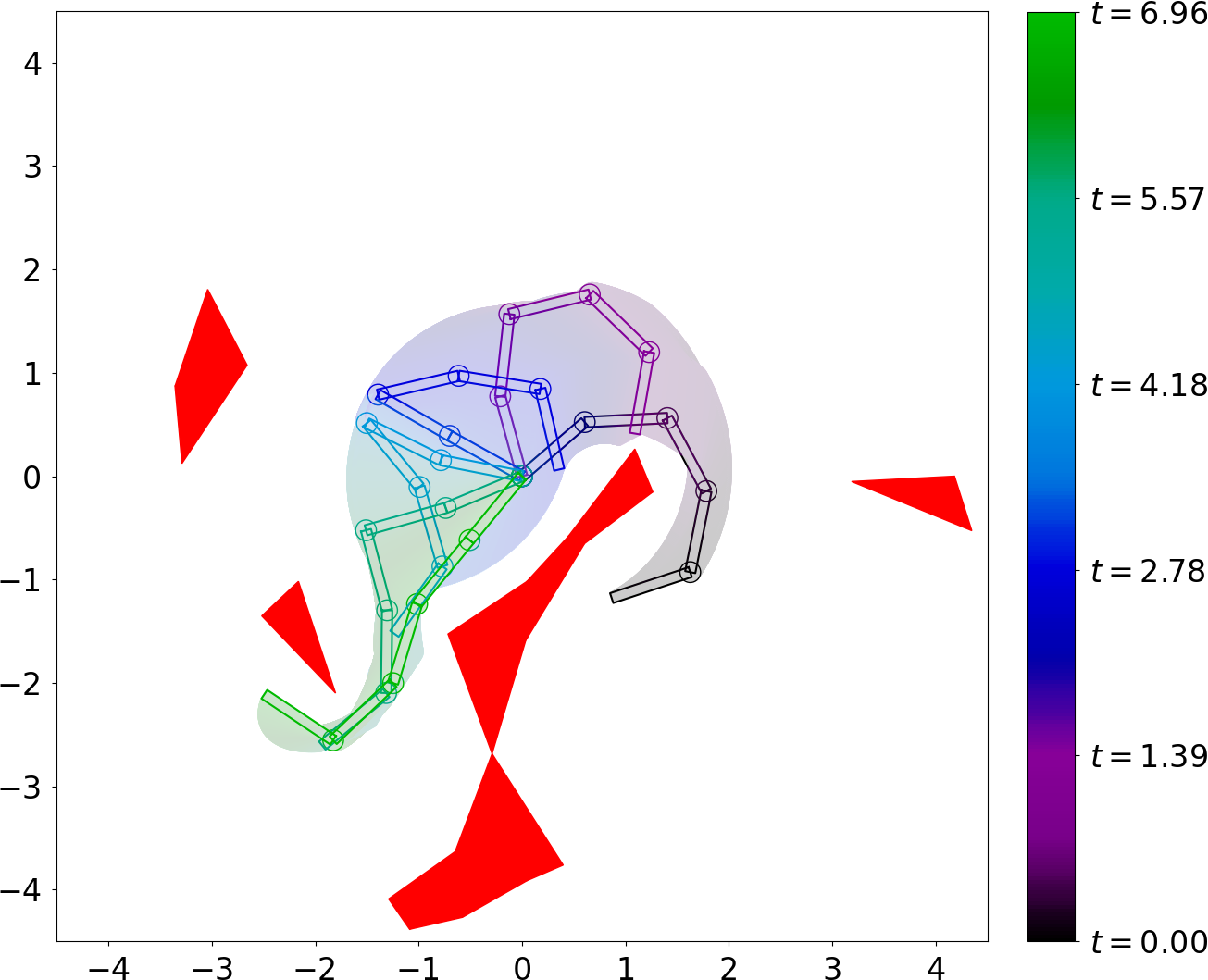}
	\end{subfigure}
	\break
	\begin{subfigure}[b]{\linewidth}
		\centering
		\includegraphics[width=\linewidth]{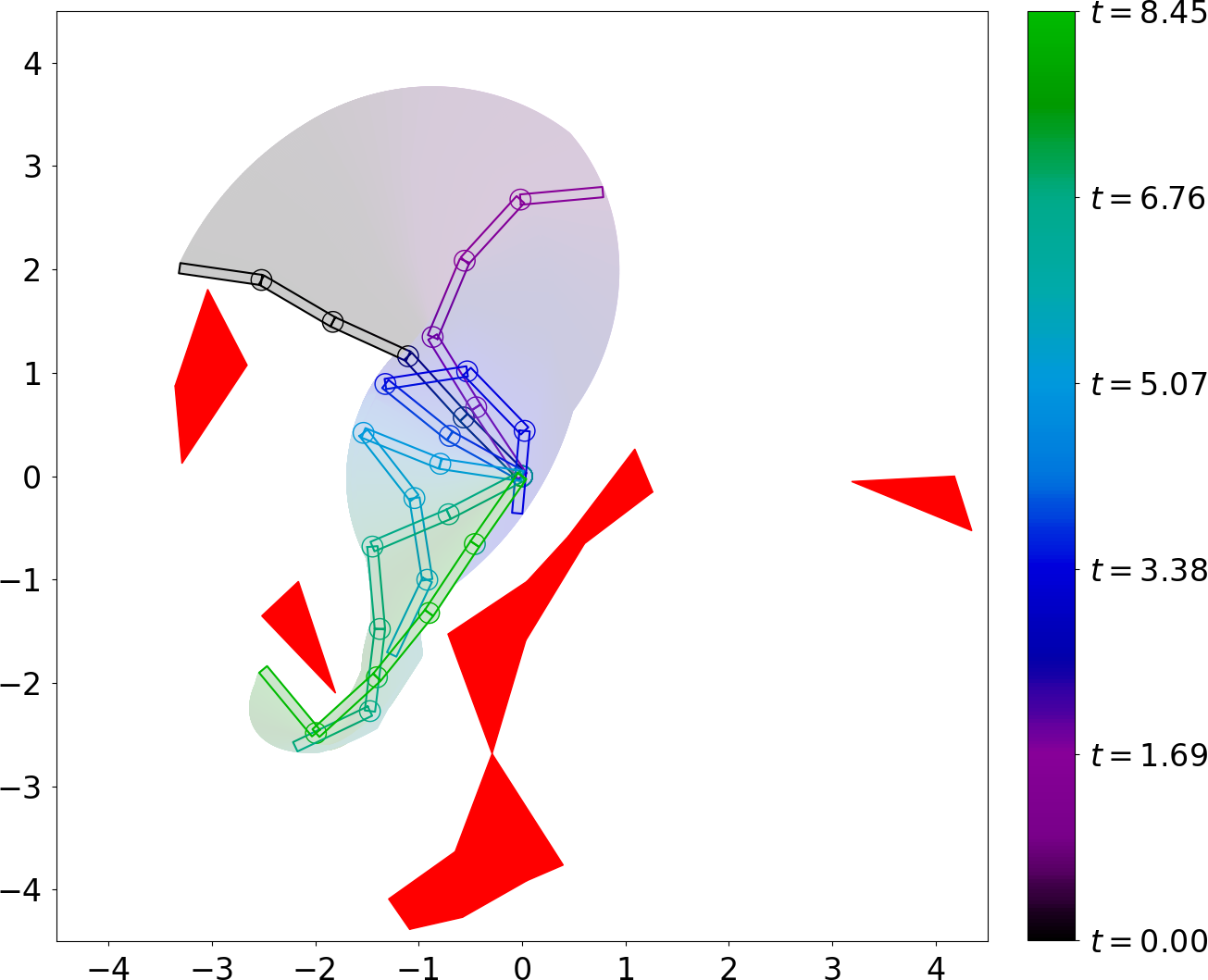}
	\end{subfigure}
	\caption{
		Two plans produced by GGCS-Planner for a planar arm around task-space obstacles (shown in red).
		We display both the swept collision volume and individual poses in the trajectory (colored by time, as indicated by the colorbar).
	}
	\label{fig:planar-arm}
\end{figure}

\subsection{Modified KUKA iiwa Arm}
\label{sec:experiments:kuka_iiwa}

\begin{figure}
	\centering
	\includegraphics[width=\linewidth]{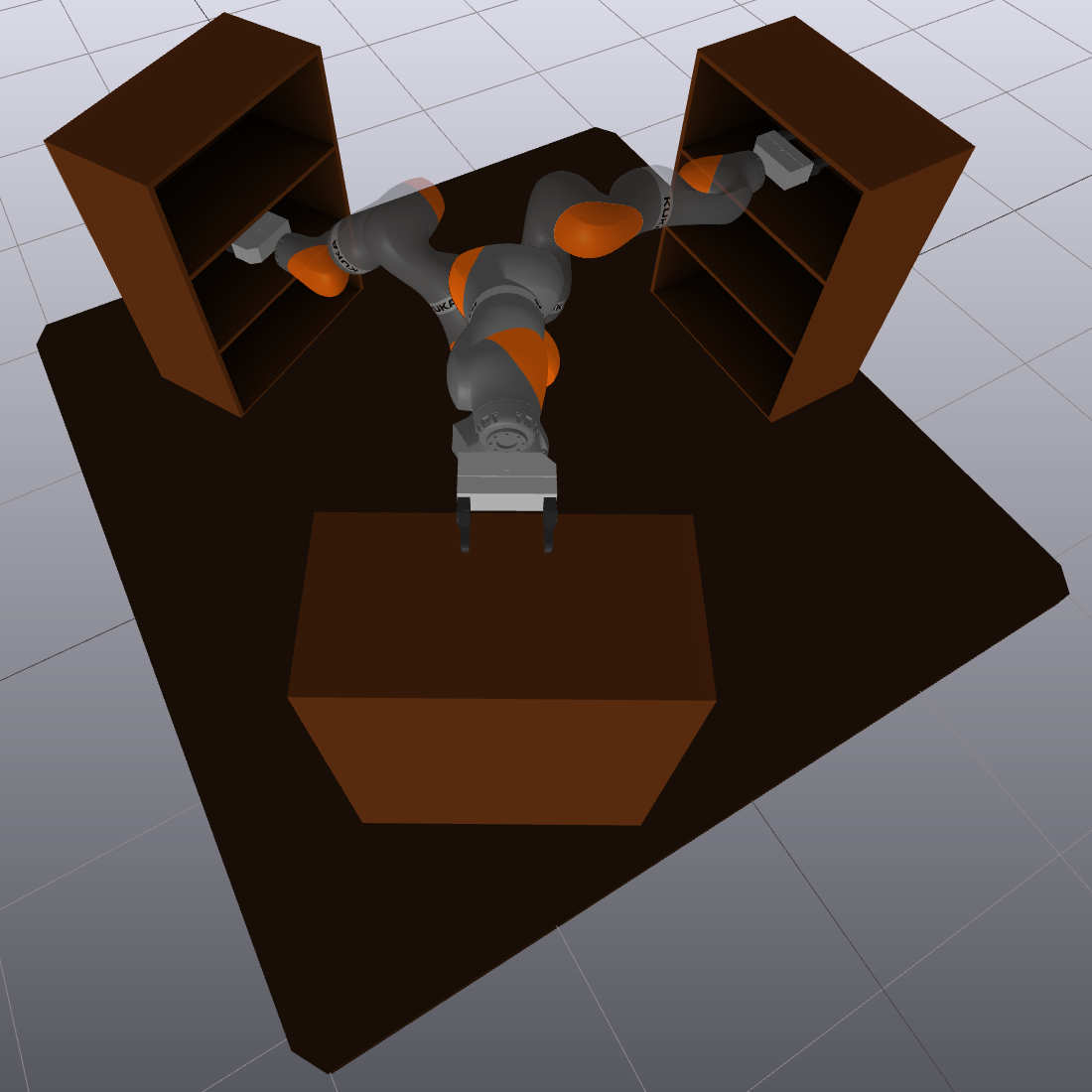}
	\caption{Key configurations (overlaid) used for a mug reorganization demo.}
	\label{fig:kuka_seeds}
\end{figure}

\begin{figure}
	\centering
	\begin{subfigure}[b]{0.49\linewidth}
		\centering
		\includegraphics[width=\linewidth]{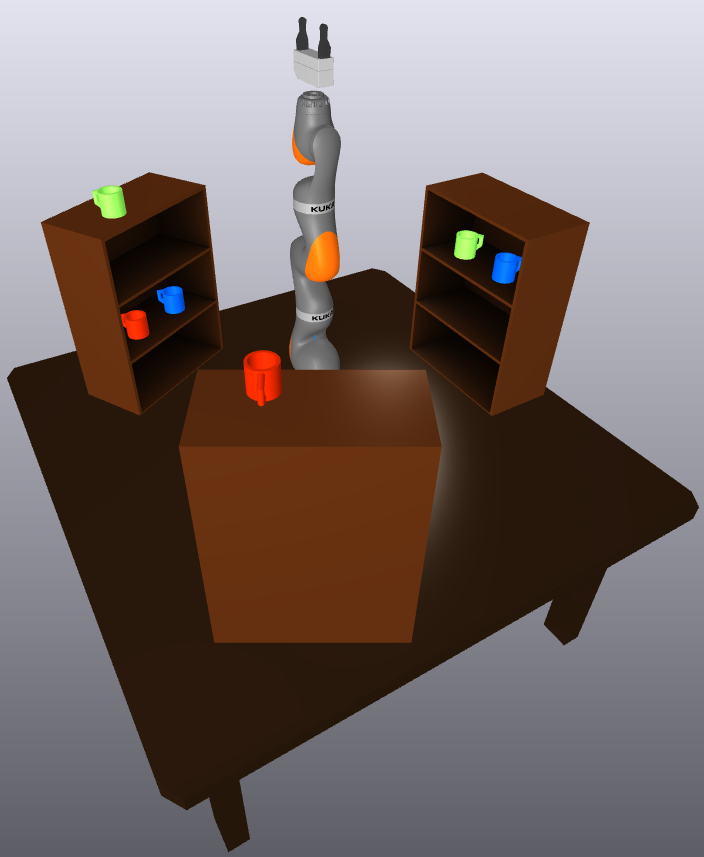}
	\end{subfigure}
	\hfill
	\begin{subfigure}[b]{0.49\linewidth}
		\centering
		\includegraphics[width=\linewidth]{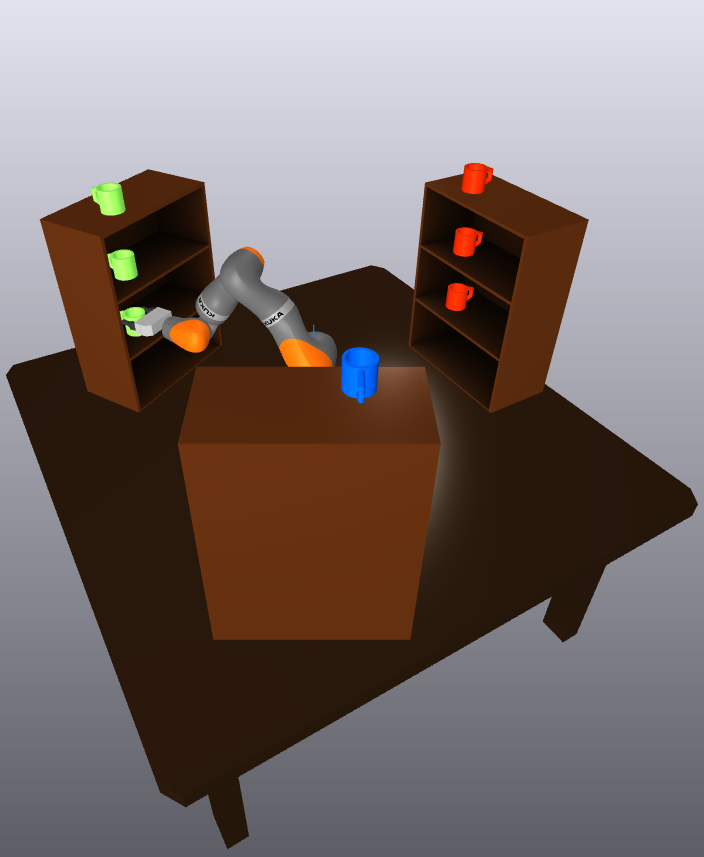}
	\end{subfigure}
	\caption{
		Initial (left) and final (right) states for the mug reorganization demo.
	}
	\label{fig:kuka_before_after}
\end{figure}

We also demonstrate that GGCS-Planner can be used to plan a series of motions using a KUKA iiwa robot arm.
The KUKA iiwa is a 7-DoF robot arm where each joint is a revolute joint with limits; in simulation, we remove the limits on the first joint, so the configuration space is $\mb{T}^{1}\times\mc{I}^{6}$.
We consider a scenario where the arm is mounted on a table, surrounded by three sets of shelves, with mugs arranged on the shelves.
The goal of the task is to sort the mugs onto different shelves, organized by color.
We specify the order of motions that are needed to achieve this goal and use GGCS-Planner to find the path from start to goal for each motion.

For this experiment, we used a set of 18 convex regions to achieve approximate coverage of the collision-free space.
These regions were adjusted as the mugs were moved about the environment and were used to plan the complete motion of the arm -- no heuristic motion or ``pre-grasp pose'' was needed to reach the grasp configuration.
Several configurations used to seed the region generation are shown in \Cref{fig:kuka_seeds}, and the initial and final states are shown in \Cref{fig:kuka_before_after}.
A video and an interactive recording of the plan are available at our \href{https://ggcs-anonymous-submission.github.io/}{results website}.

The experiment consisted of 14 motions, which were each planned individually, and we use the region refinement method from~\cite{petersen2023growing} to account for the current placement of the mugs.
This ensured that both the arm \emph{and the grasped mug} were collision-free for the entirety of each trajectory segment.
The robot takes full advantage of the base joint's lack of limits -- always choosing the shortest path and never needing to unwind any rotations.
For each segment, planning a trajectory took an average of $25.75$ seconds (with a range of $4.63$ to $50.30$ seconds).

\subsection{PR2 Bimanual Mobile Manipulator}
\label{sec:experiments:pr2}

In addition to its mobile base, the PR2 has two continuous revolute joints in each arm.
We have fixed the wrist rotation and gripper joints, so the configuration space is $\SE(2)\times\mb{T}^{2}\times\mc{I}^{10}\cong\mb{T}^{3}\times\mc{I}^{12}$.
We consider a scenario where the robot is driving around a square table that has an outward-pointing stack of three shelves on each side.
The robot must reach into the different shelves with both arms.
This represents a challenging motion planning scenario for all existing approaches due to the obstacle-rich environment and high dimensionality of the configuration space.

The performance of GGCS-Planner is largely driven by the choice of g-convex sets.
For each set of shelves, we generate IRIS regions for the robot to reach into the top, middle, and bottom shelves with both arms simultaneously.
We also generate two additional regions where the robot reaches into the middle shelf with one arm and the bottom shelf with the other while crossing its arms.
Finally, we manually add various intermediate regions to promote graph connectivity and cover more of C-Free.
In \Cref{fig:pr2-plan}, we show several robot configurations along a trajectory produced by GGCS-Planner.

For the planning scenarios considered with the PR2, we compare GGCS-Planner to existing approaches.
Trajectory lengths are listed in \Cref{tab:trajectory-lengths}, plan times are listed in \Cref{tab:plan-times}, and interactive recordings of all trajectories are available online at our \href{https://ggcs-anonymous-submission.github.io/}{results website}.
We compare our algorithm to kinematic trajectory optimization~\cite[\S7.2]{tedrake2022manipulation} (abbreviated as \emph{Drake-Trajopt}), utilizing the general nonlinear program solver SNOPT~\cite{snopt,snoptmanual}.
Drake-Trajopt struggles to figure out how to move the arms into or out of the shelf; we often have to add waypoints to force the robot to back out of the shelf by moving its base.

We also compare it to a sampling-based PRM planner.
To mitigate the curse of dimensionality and ensure connectivity between seed points, we initialize the PRM with the skeleton of the GGCS graph: for each pair of overlapping regions, we place a node in the Chebyshev center~\cite[p.~148]{boyd2004convex} of their intersection.
We then add 100,000 additional samples, drawn uniformly across C-Free (with rejection sampling).
This process takes $124.39$ seconds.
In comparison, it takes an average of $30.20$ seconds to generate an IRIS region (with a range of $8.56$ to $75.42$ seconds).
With parallelization, all of the IRIS regions were generated in only $156.63$ seconds.

The plans produced by the PRM are significantly longer than those from the GGCS-Planner, so we also examine using the output of the PRM planner as the initial guess for the trajectory optimization.
(In principle, this should help prevent Drake-Trajopt from getting stuck in local minima.)
When post-processing PRM plans with Drake-Trajopt, it sometimes produces shorter final trajectories than GGCS-Planner, at the expense of colliding slightly with the environment (an example is shown in \Cref{fig:trajopt-collision}).
This is likely due to the challenge of balancing the collision-free constraint with the minimum distance objective (and because collision-free constraints can only be applied at discrete points).

Finally, we compare our GGCS-Planner to two workarounds for applying ordinary GCS to non-Euclidean motion planning.
One could add artificial joint limits to prevent the wraparound, but placing the joint limits incorrectly could make the optimal path infeasible.
The planar arm experiment clearly demonstrates this problem; during the second trajectory in \Cref{fig:planar-arm}, the middle joint traverses more than 360\textdegree{} in the course of the plan.
Thus, the optimal trajectory is infeasible for \emph{every} possible choice of joint limits.

Another option is treating the angles as real numbers with no bound (and ignoring the fact that $0\ptxt{\textdegree}\equiv 360\ptxt{\textdegree}$).
But in this case, the correct joint angle modulo 360\textdegree{} must be chosen to get the optimal path.
Furthermore, many copies of each convex set must be made to account for each possible choice of angle modulo 360\textdegree{}, increasing the size of the optimization problem.

With both workarounds, a priori knowledge about the solution is required to guarantee that it is found, so in each comparison, we separately consider the best and worst cases.
We use the same seed points across GGCS and both GCS workarounds.

\newcommand{\clevernumspace}[1]{$\,#1\;$}
\newcommand{\faketwolines}{\vphantom{\parbox{1cm}{a\\ b}}}
\newcommand{\fakethreelines}{\vphantom{\parbox{1cm}{a\\ b\\ c}}}
\newcommand{\strikethroughverticalspace}{\vphantom{\sum}}

\begin{table*}
	\centering
	\setlength{\tabcolsep}{0.5em}
	\begin{tabular}{|>{\centering}p{1.5cm}|c|>{\centering}p{2cm}|c|>{\centering}p{3cm}|c|c|} \hline
		Experiment & GGCS-Planner & Drake-Trajopt & PRM & PRM + Drake-Trajopt & GCS-Planner (Joint Limits) & GCS-Planner (No Joint Limits)\\ \hline
		1T to 1B \faketwolines & 
			\href{https://ggcs-anonymous-submission.github.io/meshcat/1T-1B/ggcs.html}{\clevernumspace{1.829}} & 
			\href{https://ggcs-anonymous-submission.github.io/meshcat/1T-1B/trajopt.html}{\clevernumspace{1.803}} & 
			\href{https://ggcs-anonymous-submission.github.io/meshcat/1T-1B/prm.html}{\clevernumspace{4.359}} & 
			\href{https://ggcs-anonymous-submission.github.io/meshcat/1T-1B/hybrid.html}{\clevernumspace{1.808}} & 
			\href{https://ggcs-anonymous-submission.github.io/meshcat/1T-1B/naive_gcs_limits.html}{\clevernumspace{1.826}} & 
			\href{https://ggcs-anonymous-submission.github.io/meshcat/1T-1B/naive_gcs_no_limits.html}{\clevernumspace{1.839}}\\ \hline
		1CL to 1CR \faketwolines & 
			\href{https://ggcs-anonymous-submission.github.io/meshcat/1CL-1CR/ggcs.html}{\clevernumspace{2.255}} & 
			\href{https://ggcs-anonymous-submission.github.io/meshcat/1CL-1CR/trajopt.html}{\clevernumspace{2.204}} & 
			\href{https://ggcs-anonymous-submission.github.io/meshcat/1CL-1CR/prm.html}{\clevernumspace{9.219}} & 
			\href{https://ggcs-anonymous-submission.github.io/meshcat/1CL-1CR/hybrid.html}{\clevernumspace{2.182}} & 
			\href{https://ggcs-anonymous-submission.github.io/meshcat/1CL-1CR/naive_gcs_limits.html}{\clevernumspace{2.239}} & 
			\href{https://ggcs-anonymous-submission.github.io/meshcat/1CL-1CR/naive_gcs_no_limits.html}{\clevernumspace{2.247}}\\ \hline
		1M to 4M \fakethreelines & 
			\href{https://ggcs-anonymous-submission.github.io/meshcat/1M-4M/ggcs.html}{\clevernumspace{3.875}} & 
			\parbox{\linewidth}{\centering\href{https://ggcs-anonymous-submission.github.io/meshcat/1M-4M/trajopt.html}{\clevernumspace{\cancel{\strikethroughverticalspace 6.938}}}\\ $t=0.275,5.272$} & 
			\href{https://ggcs-anonymous-submission.github.io/meshcat/1M-4M/prm.html}{\clevernumspace{14.554}} & 
			\parbox{\linewidth}{\centering\href{https://ggcs-anonymous-submission.github.io/meshcat/1M-4M/hybrid.html}{\clevernumspace{\cancel{\strikethroughverticalspace 5.874}}}\\ $t=0.714,4.381$} & 
			\href{https://ggcs-anonymous-submission.github.io/meshcat/1M-4M/best_naive_gcs_limits.html}{\clevernumspace{6.482}} /
			\href{https://ggcs-anonymous-submission.github.io/meshcat/1M-4M/worst_naive_gcs_limits.html}{\clevernumspace{10.478}} & 
			\href{https://ggcs-anonymous-submission.github.io/meshcat/1M-4M/best_naive_gcs_no_limits.html}{\clevernumspace{3.990}} /
			\href{https://ggcs-anonymous-submission.github.io/meshcat/1M-4M/worst_naive_gcs_no_limits.html}{\clevernumspace{12.589}}\\ \hline
		1CL to 2CR \fakethreelines & 
			\href{https://ggcs-anonymous-submission.github.io/meshcat/1CL-2CR/ggcs.html}{\clevernumspace{4.473}} & 
			\parbox{\linewidth}{\centering\href{https://ggcs-anonymous-submission.github.io/meshcat/1CL-2CR/trajopt.html}{\clevernumspace{\cancel{\strikethroughverticalspace 5.409}}}\\ $t=2.155$} & 
			\href{https://ggcs-anonymous-submission.github.io/meshcat/1CL-2CR/prm.html}{\clevernumspace{12.110}} & 
			\parbox{\linewidth}{\centering\href{https://ggcs-anonymous-submission.github.io/meshcat/1CL-2CR/hybrid.html}{\clevernumspace{\cancel{\strikethroughverticalspace 4.108}}}\\ $t=0.49$} & 
			\href{https://ggcs-anonymous-submission.github.io/meshcat/1CL-2CR/best_naive_gcs_limits.html}{\clevernumspace{4.441}} /
			\href{https://ggcs-anonymous-submission.github.io/meshcat/1CL-2CR/worst_naive_gcs_limits.html}{\clevernumspace{13.815}} & 
			\href{https://ggcs-anonymous-submission.github.io/meshcat/1CL-2CR/best_naive_gcs_no_limits.html}{\clevernumspace{4.640}} /
			\href{https://ggcs-anonymous-submission.github.io/meshcat/1CL-2CR/worst_naive_gcs_no_limits.html}{\clevernumspace{13.233}}\\ \hline
		1CL to 3CR \fakethreelines & 
			\href{https://ggcs-anonymous-submission.github.io/meshcat/1CL-3CR/ggcs.html}{\clevernumspace{8.182}} & 
			\href{https://ggcs-anonymous-submission.github.io/meshcat/1CL-3CR/trajopt.html}{\clevernumspace{10.263}} & 
			\href{https://ggcs-anonymous-submission.github.io/meshcat/1CL-3CR/prm.html}{\clevernumspace{15.250}} & 
			\parbox{\linewidth}{\centering\href{https://ggcs-anonymous-submission.github.io/meshcat/1CL-3CR/hybrid.html}{\clevernumspace{\cancel{\strikethroughverticalspace 7.166}}}\\ $t=0.7,1.87,2.02,2.77$} & 
			\href{https://ggcs-anonymous-submission.github.io/meshcat/1CL-3CR/best_naive_gcs_limits.html}{\clevernumspace{7.820}} /
			\href{https://ggcs-anonymous-submission.github.io/meshcat/1CL-3CR/worst_naive_gcs_limits.html}{\clevernumspace{12.125}} & 
			\href{https://ggcs-anonymous-submission.github.io/meshcat/1CL-3CR/best_naive_gcs_no_limits.html}{\clevernumspace{8.501}} /
			\href{https://ggcs-anonymous-submission.github.io/meshcat/1CL-3CR/worst_naive_gcs_no_limits.html}{\clevernumspace{12.125}}\\ \hline
		1CL to 4CR \fakethreelines & 
			\href{https://ggcs-anonymous-submission.github.io/meshcat/1CL-4CR/ggcs.html}{\clevernumspace{4.382}} & 
			\href{https://ggcs-anonymous-submission.github.io/meshcat/1CL-4CR/trajopt.html}{\clevernumspace{7.583}} & 
			\href{https://ggcs-anonymous-submission.github.io/meshcat/1CL-4CR/prm.html}{\clevernumspace{17.459}} & 
			\parbox{\linewidth}{\centering\href{https://ggcs-anonymous-submission.github.io/meshcat/1CL-4CR/hybrid.html}{\clevernumspace{\cancel{\strikethroughverticalspace 6.088}}}\\ $t=0.27,0.555,4.39$} & 
			\href{https://ggcs-anonymous-submission.github.io/meshcat/1CL-4CR/best_naive_gcs_limits.html}{\clevernumspace{4.728}} /
			\href{https://ggcs-anonymous-submission.github.io/meshcat/1CL-4CR/worst_naive_gcs_limits.html}{\clevernumspace{9.961}} & 
			\href{https://ggcs-anonymous-submission.github.io/meshcat/1CL-4CR/best_naive_gcs_no_limits.html}{\clevernumspace{4.559}} /
			\href{https://ggcs-anonymous-submission.github.io/meshcat/1CL-4CR/worst_naive_gcs_no_limits.html}{\clevernumspace{12.418}}\\ \hline
		1T to 4B \fakethreelines & 
			\href{https://ggcs-anonymous-submission.github.io/meshcat/1T-4B/ggcs.html}{\clevernumspace{4.538}} & 
			\href{https://ggcs-anonymous-submission.github.io/meshcat/1T-4B/trajopt.html}{\clevernumspace{8.781}} & 
			\href{https://ggcs-anonymous-submission.github.io/meshcat/1T-4B/prm.html}{\clevernumspace{12.351}} & 
			\parbox{\linewidth}{\centering\href{https://ggcs-anonymous-submission.github.io/meshcat/1T-4B/hybrid.html}{\clevernumspace{\cancel{\strikethroughverticalspace 5.949}}}\\ $t=0.34,0.68$} & 
			\href{https://ggcs-anonymous-submission.github.io/meshcat/1T-4B/best_naive_gcs_limits.html}{\clevernumspace{5.320}} /
			\href{https://ggcs-anonymous-submission.github.io/meshcat/1T-4B/worst_naive_gcs_limits.html}{\clevernumspace{14.928}
			} & 
			\href{https://ggcs-anonymous-submission.github.io/meshcat/1T-4B/best_naive_gcs_no_limits.html}{\clevernumspace{5.473}} /
			\href{https://ggcs-anonymous-submission.github.io/meshcat/1T-4B/worst_naive_gcs_no_limits.html}{\clevernumspace{14.160}}\\ \hline
	\end{tabular}
	\caption{
		A comparison of trajectory lengths (in configuration space) for each PR2 experiment across different methods.
		Experiments are titled by the start and goal configurations.
		The configuration names indicate the shelf positions on the table (1 through 4), followed by the position of the grippers.
		T, M, B, CL, and CR stand for top, middle, bottom, cross left over right, and cross right over left (respectively).
		Table cells that are struck through indicate that the trajectory is not collision-free, and the time stamps below the trajectory length indicate when the collisions occurred.
		For both GCS-Planner workarounds, we include the best- and worst-case results (in general, achieving the best-case results requires a priori knowledge of the optimal plan).
		Interactive recordings of each trajectory are available online at our \href{https://ggcs-anonymous-submission.github.io/}{results website}.
		Each cell is linked to its corresponding recording.
	}
	\label{tab:trajectory-lengths}
\end{table*}

\begin{table}
	\centering
	\setlength{\tabcolsep}{0.25em}
	\begin{tabular}{|c|c|c|c|c|c|c|} \hline
		Experiment & GGCS-Planner & Drake-Trajopt & PRM & PRM + Drake-Trajopt\\ \hline
		1T to 1B & 25.51 & 12.69 & 0.49 & 11.61 \\ \hline
		1CL to 1CR & 39.42 & 15.23 & 0.49 & 16.11 \\ \hline
		1M to 4M & 46.61 & 2.26 & 0.53 & 25.51 \\ \hline
		1CL to 2CR & 62.87 & 9.74 & 0.54 & 21.48 \\ \hline
		1CL to 3CR & 58.60 & 7.82 & 0.52 & 27.30 \\ \hline
		1CL to 4CR & 66.15 & 4.32 & 0.54 & 40.10 \\ \hline
		1T to 4B & 29.89 & 10.92 & 0.54 & 15.36 \\ \hline
	\end{tabular}
	\caption{
		A comparison of online planning times (in seconds) for each PR2 experiment across different methods.
		(We omit the GCS workaround comparisons, as they are indistinguishable from the corresponding GGCS-Planner runtimes.)
		Experiment names match \Cref{tab:trajectory-lengths}.
	}
	\label{tab:plan-times}
\end{table}

\begin{figure}
	\centering
	\begin{overpic}[width=\linewidth]{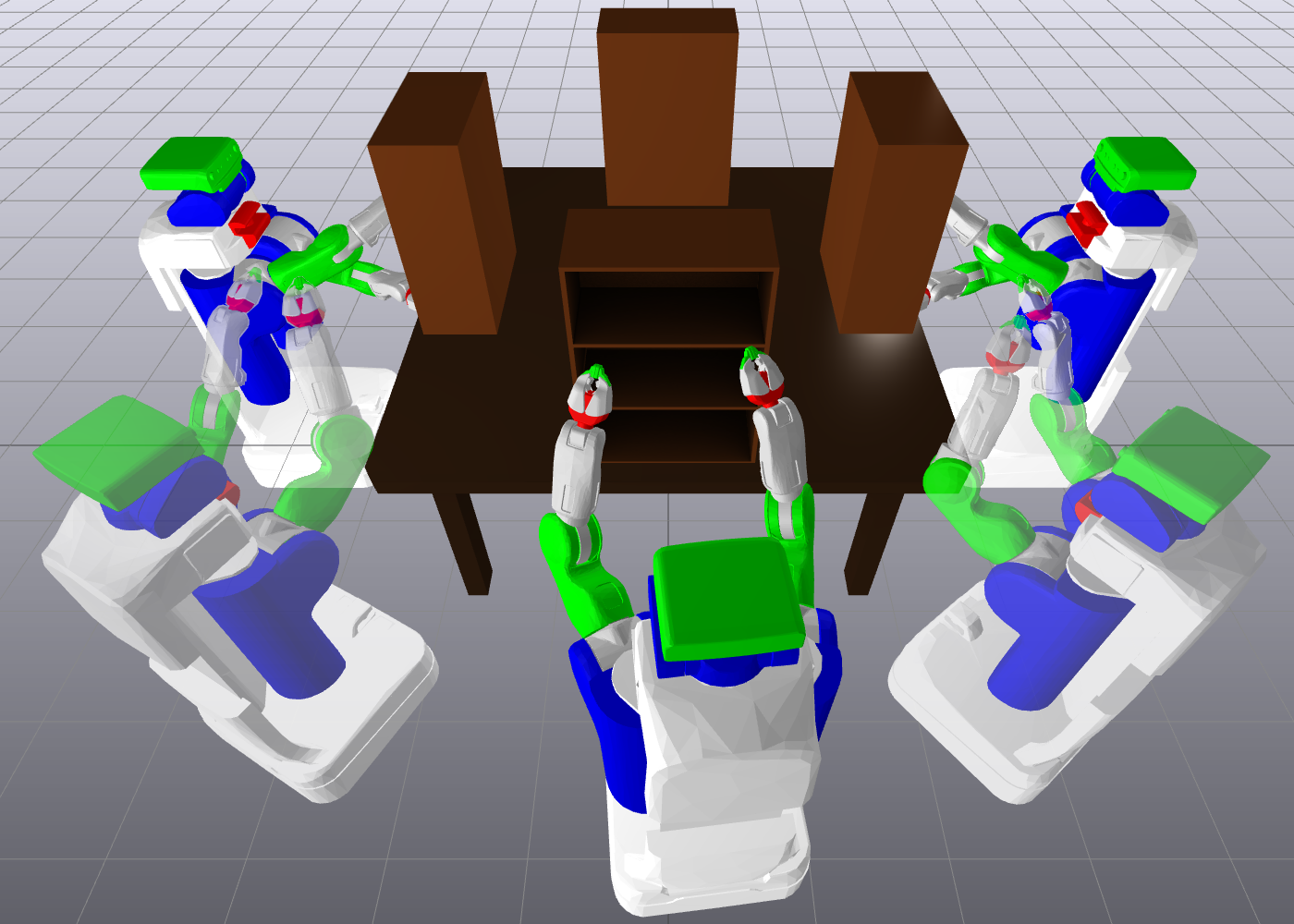}
		\put (85, 65) {(1)}
		\put (85, 10) {(2)}
		\put (40, 5) {(3)}
		\put (10,10) {(4)}
		\put (10,65) {(5)}
	\end{overpic}
	\caption{
		Individual poses along a trajectory produced by GGCS-Planner for the PR2 robot, labeled with their order in the plan.
	}
	\label{fig:pr2-plan}
\end{figure}

An interesting result is that the best case for the GCS workarounds is sometimes slightly better than GGCS.
This is because the sets are not bounded by the convexity radius, so they can grow larger (and cover more of C-Free) with the same seed points.
If the workarounds are restricted to using the same regions as GGCS, then, in the best case, their performance is indistinguishable.

\begin{figure}
	\centering
	\includegraphics[width=\linewidth]{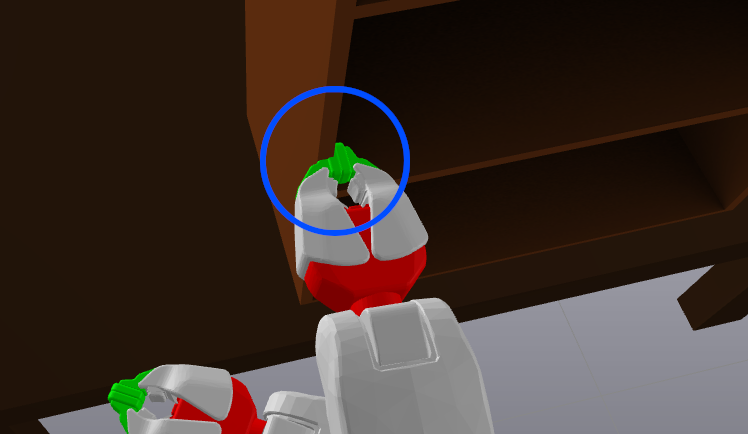}
	\caption{
		An example of the slight collisions typical of the trajectories produced by Drake-Trajopt. (The blue circle highlights the point of collision.)
	}
	\label{fig:trajopt-collision}
\end{figure}

\section{Discussion}
\label{sec:discussion}

In this paper, we have formulated the general problem of motion planning around obstacles on Riemannian manifolds as a shortest path problem in a graph of geodesically convex sets, and we have proved this formulation inherits the same guarantees as in the ordinary Euclidean case.
We describe how these theoretical developments inform simple and elegant modifications to the original GCS-Planner, in order to handle robots with mobile bases and continuous revolute joints.
This enables us to solve motion planning problems on such robotic platforms to global optimality and guarantee that the trajectory is collision-free at every point in time.
Approximate solving techniques still guarantee that trajectories are collision-free, and empirically, such trajectories are very close to optimal.

We have demonstrated that GGCS-Planner is a powerful tool for robot motion planning.
It is capable of producing plans for high degree-of-freedom systems operating in obstacle-dense configuration spaces, such as a PR2 bimanual mobile manipulator reaching into and out of shelves.
In future work, we intend to generalize the techniques outlined in this paper for use in bimanual manipulation (with implicit kinematic constraints) and planning through contact.
Although the planning and optimization frameworks used in GGCS-Planner are still in their infancy, they are already capable of producing high-quality results that are competitive with existing methods.
As further research and technical improvements are made, its performance will continue to improve.

\section*{Acknowledgments}
\label{sec:acknowledgements}

We would like to thank Tobia Marcucci and Seiji Shaw for their valuable suggestions throughout the course of this work.
Thank you to David von Wrangel for his assistance with the implementation of the PRM comparisons.
This work was supported by (in alphabetical order) Amazon.com, PO No. 2D-06310236 and the Frederick and Barabara Cronin Fellowship.

\bibliographystyle{unsrtnat}
\bibliography{ref}

\begin{thebibliography}{51}
\providecommand{\natexlab}[1]{#1}
\providecommand{\url}[1]{\texttt{#1}}
\expandafter\ifx\csname urlstyle\endcsname\relax
  \providecommand{\doi}[1]{doi: #1}\else
  \providecommand{\doi}{doi: \begingroup \urlstyle{rm}\Url}\fi

\bibitem[Marcucci et~al.(2021)Marcucci, Umenberger, Parrilo, and
  Tedrake]{marcucci2021shortest}
Tobia Marcucci, Jack Umenberger, Pablo~A Parrilo, and Russ Tedrake.
\newblock \href{https://arxiv.org/abs/2101.11565}{Shortest paths in graphs of
  convex sets}.
\newblock \emph{arXiv preprint arXiv:2101.11565}, 2021.

\bibitem[Marcucci et~al.(2022)Marcucci, Petersen, von Wrangel, and
  Tedrake]{marcucci2022motion}
Tobia Marcucci, Mark Petersen, David von Wrangel, and Russ Tedrake.
\newblock \href{https://arxiv.org/abs/2205.04422}{Motion Planning around
  Obstacles with Convex Optimization}.
\newblock \emph{arXiv preprint arXiv:2205.04422}, 2022.

\bibitem[Kavraki et~al.(1996)Kavraki, Svestka, Latombe, and
  Overmars]{kavraki1996probabilistic}
Lydia~E Kavraki, Petr Svestka, J-C Latombe, and Mark~H Overmars.
\newblock
  \href{https://ieeexplore.ieee.org/abstract/document/508439}{Probabilistic
  roadmaps for path planning in high-dimensional configuration spaces}.
\newblock \emph{IEEE transactions on Robotics and Automation}, 12\penalty0
  (4):\penalty0 566--580, 1996.

\bibitem[LaValle et~al.(1998)]{lavalle1998rapidly}
Steven~M LaValle et~al.
\newblock Rapidly-exploring random trees: A new tool for path planning.
\newblock 1998.

\bibitem[Tedrake(2022)]{tedrake2022manipulation}
Russ Tedrake.
\newblock \emph{Robotic Manipulation}.
\newblock 2022.
\newblock URL \url{http://manipulation.mit.edu}.

\bibitem[Zucker et~al.(2013)Zucker, Ratliff, Dragan, Pivtoraiko, Klingensmith,
  Dellin, Bagnell, and Srinivasa]{zucker2013chomp}
Matt Zucker, Nathan Ratliff, Anca~D Dragan, Mihail Pivtoraiko, Matthew
  Klingensmith, Christopher~M Dellin, J~Andrew Bagnell, and Siddhartha~S
  Srinivasa.
\newblock
  \href{https://journals.sagepub.com/doi/abs/10.1177/0278364913488805}{Chomp:
  Covariant hamiltonian optimization for motion planning}.
\newblock \emph{The International Journal of Robotics Research}, 32\penalty0
  (9-10):\penalty0 1164--1193, 2013.

\bibitem[Kalakrishnan et~al.(2011)Kalakrishnan, Chitta, Theodorou, Pastor, and
  Schaal]{kalakrishnan2011stomp}
Mrinal Kalakrishnan, Sachin Chitta, Evangelos Theodorou, Peter Pastor, and
  Stefan Schaal.
\newblock \href{https://ieeexplore.ieee.org/abstract/document/5980280}{STOMP:
  Stochastic trajectory optimization for motion planning}.
\newblock In \emph{2011 IEEE international conference on robotics and
  automation}, pages 4569--4574. IEEE, 2011.

\bibitem[Toussaint(2014)]{toussaint2014newton}
Marc Toussaint.
\newblock \href{https://arxiv.org/abs/1407.0414}{Newton methods for k-order
  Markov constrained motion problems}.
\newblock \emph{arXiv preprint arXiv:1407.0414}, 2014.

\bibitem[{Gurobi Optimization, LLC}(2023)]{gurobi}
{Gurobi Optimization, LLC}.
\newblock \href{https://www.gurobi.com/}{Gurobi Optimizer Reference Manual},
  2023.

\bibitem[ApS(2019)]{mosek}
MOSEK ApS.
\newblock \href{https://www.mosek.com/documentation/}{MOSEK Optimization
  Suite}, 2019.

\bibitem[Ioan et~al.(2021)Ioan, Prodan, Olaru, Stoican, and
  Niculescu]{ioan2021mixed}
Daniel Ioan, Ionela Prodan, Sorin Olaru, Florin Stoican, and Silviu-Iulian
  Niculescu.
\newblock
  \href{https://www.sciencedirect.com/science/article/abs/pii/S1367578820300754}{Mixed-integer
  programming in motion planning}.
\newblock \emph{Annual Reviews in Control}, 51:\penalty0 65--87, 2021.

\bibitem[Landry et~al.(2016)Landry, Deits, Florence, and
  Tedrake]{landry2016aggressive}
Benoit Landry, Robin Deits, Peter~R Florence, and Russ Tedrake.
\newblock
  \href{https://ieeexplore.ieee.org/abstract/document/7487282}{Aggressive
  quadrotor flight through cluttered environments using mixed integer
  programming}.
\newblock In \emph{2016 IEEE international conference on robotics and
  automation (ICRA)}, pages 1469--1475. IEEE, 2016.

\bibitem[Deits and Tedrake(2014)]{deits2014footstep}
Robin Deits and Russ Tedrake.
\newblock \href{https://ieeexplore.ieee.org/abstract/document/7041373}{Footstep
  planning on uneven terrain with mixed-integer convex optimization}.
\newblock In \emph{2014 IEEE-RAS international conference on humanoid robots},
  pages 279--286. IEEE, 2014.

\bibitem[Valenzuela(2016)]{valenzuela2016mixed}
Andr{\'e}s~Klee Valenzuela.
\newblock
  \emph{\href{https://dspace.mit.edu/handle/1721.1/103432}{Mixed-integer convex
  optimization for planning aggressive motions of legged robots over rough
  terrain}}.
\newblock PhD thesis, Massachusetts Institute of Technology, 2016.

\bibitem[Aceituno-Cabezas et~al.(2017)Aceituno-Cabezas, Mastalli, Dai, Focchi,
  Radulescu, Caldwell, Cappelletto, Grieco, Fern{\'a}ndez-L{\'o}pez, and
  Semini]{aceituno2017simultaneous}
Bernardo Aceituno-Cabezas, Carlos Mastalli, Hongkai Dai, Michele Focchi,
  Andreea Radulescu, Darwin~G Caldwell, Jos{\'e} Cappelletto, Juan~C Grieco,
  Gerardo Fern{\'a}ndez-L{\'o}pez, and Claudio Semini.
\newblock
  \href{https://ieeexplore.ieee.org/abstract/document/8141917}{Simultaneous
  contact, gait, and motion planning for robust multilegged locomotion via
  mixed-integer convex optimization}.
\newblock \emph{IEEE Robotics and Automation Letters}, 3\penalty0 (3):\penalty0
  2531--2538, 2017.

\bibitem[Dai et~al.(2019)Dai, Izatt, and Tedrake]{dai2019global}
Hongkai Dai, Gregory Izatt, and Russ Tedrake.
\newblock
  \href{https://link.springer.com/chapter/10.1007/978-3-030-28619-4_56}{Global
  inverse kinematics via mixed-integer convex optimization}.
\newblock \emph{The International Journal of Robotics Research}, 38\penalty0
  (12-13):\penalty0 1420--1441, 2019.

\bibitem[Adu-Bredu et~al.(2022)Adu-Bredu, Devraj, and Jenkins]{adu2022optimal}
Alphonsus Adu-Bredu, Nikhil Devraj, and Odest~Chadwicke Jenkins.
\newblock \href{https://arxiv.org/abs/2211.09632}{Optimal Constrained Task
  Planning as Mixed Integer Programming}.
\newblock \emph{arXiv preprint arXiv:2211.09632}, 2022.

\bibitem[Tika et~al.(2022)Tika, Gashi, and Bajcinca]{tika2022robot}
Argtim Tika, Fatos Gashi, and Naim Bajcinca.
\newblock \href{https://ieeexplore.ieee.org/document/9838243}{Robot Online Task
  and Trajectory Planning using Mixed-Integer Model Predictive Control}.
\newblock In \emph{2022 European Control Conference (ECC)}, pages 2005--2011.
  IEEE, 2022.

\bibitem[Saha and Julius(2017)]{saha2017task}
Sayan Saha and Anak~Agung Julius.
\newblock \href{https://ieeexplore.ieee.org/abstract/document/8047983}{Task and
  motion planning for manipulator arms with metric temporal logic
  specifications}.
\newblock \emph{IEEE robotics and automation letters}, 3\penalty0 (1):\penalty0
  379--386, 2017.

\bibitem[Yi et~al.(2022)Yi, Luong, Chae, Ahn, Noh, Tran, Doh, Auh, Pico,
  Yumbla, et~al.]{yi2022online}
June-sup Yi, Tuan~Anh Luong, Hosik Chae, Min~Sung Ahn, Donghun Noh, Huy~Nguyen
  Tran, Myeongyun Doh, Eugene Auh, Nabih Pico, Francisco Yumbla, et~al.
\newblock \href{https://www.mdpi.com/2076-3417/12/8/4018}{An Online
  Task-Planning Framework Using Mixed Integer Programming for Multiple Cooking
  Tasks Using a Dual-Arm Robot}.
\newblock \emph{Applied Sciences}, 12\penalty0 (8):\penalty0 4018, 2022.

\bibitem[Chen et~al.(2022)Chen, Li, Huang, Garrett, Sun, Fan, Hofmann, Mueller,
  Koenig, and Williams]{chen2022cooperative}
Jingkai Chen, Jiaoyang Li, Yijiang Huang, Caelan Garrett, Dawei Sun, Chuchu
  Fan, Andreas Hofmann, Caitlin Mueller, Sven Koenig, and Brian~C Williams.
\newblock \href{https://arxiv.org/abs/2203.02475}{Cooperative task and motion
  planning for multi-arm assembly systems}.
\newblock \emph{arXiv preprint arXiv:2203.02475}, 2022.

\bibitem[Suh et~al.(2020)Suh, Xiong, Singletary, Ames, and
  Burdick]{suh2020energy}
HJ~Terry Suh, Xiaobin Xiong, Andrew Singletary, Aaron~D Ames, and Joel~W
  Burdick.
\newblock
  \href{https://ieeexplore.ieee.org/abstract/document/9340761?casa_token=zjMdFEI-URwAAAAA:j6rSQmA5QWQd6A_4UdYRIDUf27-BeFsVSV1nwVpHSlf8fTbWHghXOXYxlGvrQnpnKdwHxTA}{Energy-efficient
  motion planning for multi-modal hybrid locomotion}.
\newblock In \emph{2020 IEEE/RSJ International Conference on Intelligent Robots
  and Systems (IROS)}, pages 7027--7033. IEEE, 2020.

\bibitem[Ratliff et~al.(2018)Ratliff, Issac, Kappler, Birchfield, and
  Fox]{ratliff2018riemannian}
Nathan~D Ratliff, Jan Issac, Daniel Kappler, Stan Birchfield, and Dieter Fox.
\newblock \href{https://arxiv.org/abs/1801.02854}{Riemannian motion policies}.
\newblock \emph{arXiv preprint arXiv:1801.02854}, 2018.

\bibitem[Cheng et~al.(2018)Cheng, Mukadam, Issac, Birchfield, Fox, Boots, and
  Ratliff]{cheng2018rmpflow}
Ching-An Cheng, Mustafa Mukadam, Jan Issac, Stan Birchfield, Dieter Fox, Byron
  Boots, and Nathan Ratliff.
\newblock
  \href{https://link.springer.com/chapter/10.1007/978-3-030-44051-0_26}{RMPflow:
  A computational graph for automatic motion policy generation}.
\newblock In \emph{International Workshop on the Algorithmic Foundations of
  Robotics}, pages 441--457. Springer, 2018.

\bibitem[Rana et~al.(2021)Rana, Li, Fox, Chernova, Boots, and
  Ratliff]{rana2021towards}
M~Asif Rana, Anqi Li, Dieter Fox, Sonia Chernova, Byron Boots, and Nathan
  Ratliff.
\newblock \href{https://ieeexplore.ieee.org/abstract/document/9636097}{Towards
  Coordinated Robot Motions: End-to-End Learning of Motion Policies on
  Transform Trees}.
\newblock In \emph{2021 IEEE/RSJ International Conference on Intelligent Robots
  and Systems (IROS)}, pages 7792--7799. IEEE, 2021.

\bibitem[Van~Wyk et~al.(2022)Van~Wyk, Xie, Li, Rana, Babich, Peele, Wan,
  Akinola, Sundaralingam, Fox, et~al.]{van2022geometric}
Karl Van~Wyk, Mandy Xie, Anqi Li, Muhammad~Asif Rana, Buck Babich, Bryan Peele,
  Qian Wan, Iretiayo Akinola, Balakumar Sundaralingam, Dieter Fox, et~al.
\newblock
  \href{https://ieeexplore.ieee.org/abstract/document/9682604}{Geometric
  fabrics: Generalizing classical mechanics to capture the physics of
  behavior}.
\newblock \emph{IEEE Robotics and Automation Letters}, 7\penalty0 (2):\penalty0
  3202--3209, 2022.

\bibitem[Bylard et~al.(2021)Bylard, Bonalli, and Pavone]{bylard2021composable}
Andrew Bylard, Riccardo Bonalli, and Marco Pavone.
\newblock
  \href{https://ieeexplore.ieee.org/abstract/document/9561320}{Composable
  geometric motion policies using multi-task pullback bundle dynamical
  systems}.
\newblock In \emph{2021 IEEE International Conference on Robotics and
  Automation (ICRA)}, pages 7464--7470. IEEE, 2021.

\bibitem[Klein et~al.(2022)Klein, Jaquier, Meixner, and
  Asfour]{klein2022riemannian}
Holger Klein, No{\'e}mie Jaquier, Andre Meixner, and Tamim Asfour.
\newblock \href{https://ieeexplore.ieee.org/abstract/document/9982127}{A
  Riemannian take on human motion analysis and retargeting}.
\newblock In \emph{2022 IEEE/RSJ International Conference on Intelligent Robots
  and Systems (IROS)}, pages 5210--5217. IEEE, 2022.

\bibitem[Do~Carmo(1992)]{docarmo1992riemannian}
Manfredo~Perdigao Do~Carmo.
\newblock \emph{\href{https://link.springer.com/book/9780817634902}{Riemannian
  geometry}}, volume~6.
\newblock Springer, 1992.

\bibitem[Lee(2013)]{lee2013smooth}
John~M Lee.
\newblock
  \href{https://link.springer.com/book/10.1007/978-1-4419-9982-5}{Smooth
  manifolds}.
\newblock In \emph{Introduction to smooth manifolds}. Springer, 2013.

\bibitem[Lee(2018)]{lee2018introduction}
John~M Lee.
\newblock
  \emph{\href{https://link.springer.com/book/10.1007/978-3-319-91755-9}{Introduction
  to Riemannian manifolds}}, volume 176.
\newblock Springer, 2018.

\bibitem[Boumal(2022)]{boumal2020introduction}
Nicolas Boumal.
\newblock
  \href{https://www.cambridge.org/core/books/an-introduction-to-optimization-on-smooth-manifolds/EAF2B35457B7034AC747188DC2FFC058}{An
  introduction to optimization on smooth manifolds}.
\newblock Jun 2022.
\newblock URL \url{https://www.nicolasboumal.net/book}.

\bibitem[Myers and Steenrod(1939)]{myers1939group}
Sumner~B Myers and Norman~Earl Steenrod.
\newblock \href{https://www.jstor.org/stable/1968928}{The group of isometries
  of a Riemannian manifold}.
\newblock \emph{Annals of Mathematics}, pages 400--416, 1939.

\bibitem[At{\d{c}}eken and Kele{\c{s}}(2003)]{atceken2003product}
M~At{\d{c}}eken and S~Kele{\c{s}}.
\newblock \href{http://www.mathem.pub.ro/dgds/v05/D05.htm}{On the product
  Riemannian manifolds}.
\newblock \emph{Differential Geometry-Dynamical Systems}, 5\penalty0
  (1):\penalty0 1--7, 2003.

\bibitem[Vishnoi(2018)]{vishnoi2018geodesic}
Nisheeth~K Vishnoi.
\newblock \href{https://arxiv.org/abs/1806.06373}{Geodesic convex optimization:
  Differentiation on manifolds, geodesics, and convexity}.
\newblock \emph{arXiv preprint arXiv:1806.06373}, 2018.

\bibitem[Zhang and Sra(2016)]{zhang2016first}
Hongyi Zhang and Suvrit Sra.
\newblock \href{https://proceedings.mlr.press/v49/zhang16b.html}{First-order
  methods for geodesically convex optimization}.
\newblock In \emph{Conference on Learning Theory}, pages 1617--1638. PMLR,
  2016.

\bibitem[Bacak(2014)]{bacak2014convex}
Miroslav Bacak.
\newblock
  \emph{\href{https://www.degruyter.com/document/doi/10.1515/9783110361629/html}{Convex
  Analysis and Optimization in Hadamard Spaces}}, volume~22.
\newblock Walter de Gruyter GmbH \& Co KG, 2014.

\bibitem[Marchand et~al.(2002)Marchand, Martin, Weismantel, and
  Wolsey]{marchand2002cutting}
Hugues Marchand, Alexander Martin, Robert Weismantel, and Laurence Wolsey.
\newblock
  \href{https://www.sciencedirect.com/science/article/pii/S0166218X01003481}{Cutting
  planes in integer and mixed integer programming}.
\newblock \emph{Discrete Applied Mathematics}, 123\penalty0 (1-3):\penalty0
  397--446, 2002.

\bibitem[Burago et~al.(2001)Burago, Burago, and Ivanov]{burago2022course}
Dmitri Burago, Yuri Burago, and Sergei Ivanov.
\newblock \emph{\href{https://www.ams.org/books/gsm/033/}{A course in metric
  geometry}}, volume~33.
\newblock American Mathematical Society, 2001.

\bibitem[V{\"a}is{\"a}l{\"a}(2003)]{vaisala2003proof}
Jussi V{\"a}is{\"a}l{\"a}.
\newblock \href{https://www.jstor.org/stable/3647749}{A proof of the Mazur-Ulam
  theorem}.
\newblock \emph{The American mathematical monthly}, 110\penalty0 (7):\penalty0
  633--635, 2003.

\bibitem[Amice et~al.(2023)Amice, Dai, Werner, Zhang, and
  Tedrake]{amice2023finding}
Alexandre Amice, Hongkai Dai, Peter Werner, Annan Zhang, and Russ Tedrake.
\newblock
  \href{https://link.springer.com/chapter/10.1007/978-3-031-21090-7_20}{Finding
  and Optimizing Certified, Collision-Free Regions in Configuration Space for
  Robot Manipulators}.
\newblock In \emph{International Workshop on the Algorithmic Foundations of
  Robotics}, pages 328--348. Springer, 2023.

\bibitem[Park and Ravani(1995)]{park1995bezier}
F.~C. Park and B.~Ravani.
\newblock
  \href{https://asmedigitalcollection.asme.org/mechanicaldesign/article/117/1/36/417734/Be-zier-Curves-on-Riemannian-Manifolds-and-Lie}{B{\'e}zier
  curves on Riemannian manifolds and Lie groups with kinematics applications}.
\newblock \emph{Journal of Mechanical Design}, 117\penalty0 (1):\penalty0
  36--40, 03 1995.

\bibitem[Popiel and Noakes(2007)]{popiel2007bezier}
Tomasz Popiel and Lyle Noakes.
\newblock
  \href{https://www.sciencedirect.com/science/article/pii/S0021904507000469}{B{\'e}zier
  curves and {$C^2$} interpolation in Riemannian manifolds}.
\newblock \emph{Journal of Approximation Theory}, 148\penalty0 (2):\penalty0
  111--127, 2007.

\bibitem[Tedrake and the Drake Development~Team(2019)]{tedrake2019drake}
Russ Tedrake and the Drake Development~Team.
\newblock Drake: Model-based design and verification for robotics, 2019.
\newblock URL \url{https://drake.mit.edu}.

\bibitem[Petersen and Tedrake(2023)]{petersen2023growing}
Mark Petersen and Russ Tedrake.
\newblock Growing convex collision-free regions in configuration space using
  nonlinear programming.
\newblock \emph{arXiv preprint arXiv:2303.14737}, 2023.

\bibitem[Gill et~al.(2005)Gill, Murray, and Saunders]{snopt}
Philip~E. Gill, Walter Murray, and Michael~A. Saunders.
\newblock \href{https://epubs.siam.org/doi/10.1137/S1052623499350013}{{SNOPT}:
  An {SQP} algorithm for large-scale constrained optimization}.
\newblock \emph{SIAM Rev.}, 47:\penalty0 99--131, 2005.

\bibitem[Gill et~al.(2018)Gill, Murray, Saunders, and Wong]{snoptmanual}
Philip~E. Gill, Walter Murray, Michael~A. Saunders, and Elizabeth Wong.
\newblock \href{https://ccom.ucsd.edu/~optimizers/docs/snopt/}{User's Guide for
  {SNOPT 7.7}: Software for Large-Scale Nonlinear Programming}.
\newblock Center for Computational Mathematics Report CCoM 18-1, Department of
  Mathematics, University of California, San Diego, La Jolla, CA, 2018.

\bibitem[Boyd et~al.(2004)Boyd, Boyd, and Vandenberghe]{boyd2004convex}
Stephen Boyd, Stephen~P Boyd, and Lieven Vandenberghe.
\newblock
  \emph{\href{http://www.cambridge.org/gb/academic/subjects/statistics-probability/optimization-or-and-risk/convex-optimization}{Convex
  optimization}}.
\newblock Cambridge university press, 2004.
\newblock URL \url{https://web.stanford.edu/~boyd/cvxbook/}.

\bibitem[Ciarlet(1988)]{ciarlet1988elasticity}
Phillipe~G Ciarlet.
\newblock
  \emph{\href{https://link.springer.com/article/10.1007/BF00046568}{Mathematical
  Elasticity Volume I: Three-Dimensional Elasticity}}.
\newblock Studies in mathematics and its applications. Elsevier Science,
  London, England, mar 1988.

\bibitem[Cartan(1983)]{cartan1983geometry}
{\'E}lie Cartan.
\newblock \emph{Geometry of Riemannian Spaces: Lie Groups; History, Frontiers
  and Applications Series}, volume~13.
\newblock Math Science Press, 1983.

\bibitem[Phillips-Grafflin(2023)]{common_robotics_utilities}
Calder Phillips-Grafflin.
\newblock \href{https://github.com/calderpg/common_robotics_utilities}{Common
  Robotics Utilities}, 2023.

\end{thebibliography}

\clearpage
\appendices
\renewcommand{\thesubsectiondis}{\Alph{section}.\arabic{subsection}}
\renewcommand{\thesubsection}{\Alph{section}.\arabic{subsection}}

{
\crefalias{section}{appendix}
\crefalias{subsection}{appendix}

\section{Proofs}
\label{appx:proofs}

\subsection{Proof of \texorpdfstring{\Cref{thm:burago}}{Theorem 1}}
\label{proof:burago}

\begin{lemma}
	For any $p,q\in\closure{\mc{M}}$, there is a piecewise-smooth path connecting $p$ and $q$.
	\label{lemma:piecewise_smooth_path}
\end{lemma}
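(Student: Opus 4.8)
The plan is to reduce the claim to two local statements and then concatenate. Write $\partial\mc{M}=\closure{\mc{M}}\cut\mc{M}$. It suffices to show (i) any two points of the open set $\mc{M}$ are joined by a piecewise-smooth path lying in $\mc{M}$, and (ii) every $p\in\partial\mc{M}$ is joined to some interior point of $\mc{M}$ by a piecewise-smooth path lying in $\closure{\mc{M}}$. Given these, an arbitrary pair $p,q\in\closure{\mc{M}}$ is handled by routing through the interior: replace each boundary endpoint by a nearby interior point via (ii), connect the two interior points via (i), and concatenate. A concatenation of finitely many piecewise-smooth curves is again piecewise-smooth.

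For (i), I would invoke the standard fact that a connected smooth manifold is piecewise-smoothly path-connected. Since $\mc{M}$ is an open subset of the smooth manifold $\mc{Q}$, it is itself a smooth $d$-manifold, and by hypothesis it is path-connected, hence connected. Define $x\sim y$ iff $x$ and $y$ are joined by a piecewise-smooth path in $\mc{M}$; each equivalence class is open, because in any coordinate chart nearby points are joined by coordinate line segments, which pull back to smooth curves. The classes partition $\mc{M}$, so connectedness forces a single class.

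The crux is (ii), where the path must remain inside $\closure{\mc{M}}$. I would work in a chart $(\mc{U},\varphi)$ of $\mc{Q}$ about $p$ with $\varphi(\mc{U})$ an open ball and $\varphi(p)=0$. Since $\mc{U}$ is open, $\closure{\mc{M}}\cap\mc{U}=\closure{\mc{M}\cap\mc{U}}\cap\mc{U}$, so in coordinates the local picture is the closure of the open set $O:=\varphi(\mc{M}\cap\mc{U})$, with $0\in\partial O$ because $p\in\closure{\mc{M}}$. It then remains to build a smooth curve from $0$ into $O$ that stays in $\closure{O}$. Here I would use the structural fact quoted in the text, that $\closure{\mc{M}}$ is a topological manifold-with-boundary (with corners): near $p$ it is modeled on a corner $[0,\infty)^k\times\R^{d-k}$, which supplies an \emph{inward} direction $w$ such that the ray $t\mapsto tw$ lies in the model's interior for small $t>0$. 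Pulling the corresponding straight segment back through $\varphi$ produces a smooth curve leaving $p$ and immediately entering $\mc{M}$ while remaining in $\closure{\mc{M}}$.

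The main obstacle is exactly this last construction: unlike in the interior case, one cannot simply take the minimizing geodesic from $p$ to a nearby interior point, since such a geodesic may exit $\closure{\mc{M}}$. Controlling the curve near a possibly-nonsmooth boundary is where the local half-space/corner model does the real work, and it is the one place the argument genuinely uses that $\closure{\mc{M}}$ is a ROSC rather than an arbitrary closed set (the closure of a general connected open set need not even be path-connected). Finally, every curve produced is smooth on each piece of a compact parameter interval, and $\closure{\mc{M}}$ is bounded with a smooth metric, so each piece has bounded speed and the concatenated path is piecewise-smooth and rectifiable — precisely the input needed to run the length-space hypotheses of Burago's Theorem 2.5.23 in the proof of \Cref{thm:burago}.
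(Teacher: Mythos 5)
Your reduction to (i) interior--interior connectivity plus (ii) boundary-to-interior connectivity is sound, and your proof of (i) via the open-equivalence-class argument is correct. The gap is in (ii), exactly at the point you yourself flag as the crux. First, the statement that $\closure{\mc{M}}$ is locally modeled on a corner $[0,\infty)^k\times\R^{d-k}$ is an informal motivating remark in \Cref{sec:motion_planning}, not an established hypothesis; the paper explicitly declines to develop a manifolds-with-corners theory, and the claim fails for general bounded open $\mc{M}$. Second, and more seriously, even where such a model exists it is only a \emph{topological} homeomorphism: the ``inward ray'' $t\mapsto tw$ lives in the model's coordinates, and transporting it to $\mc{Q}$ through that homeomorphism yields only a continuous curve. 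Your construction instead takes a straight segment in the coordinates of the smooth chart $\varphi$, which is smooth but need not remain in the closure. Concretely, let $O\subseteq\R^2$ be the strip between two logarithmic spirals converging to the origin, $O=\set{(r\cos\theta,r\sin\theta)\setmid \theta>0,\ e^{-\theta}<r<2e^{-\theta}}$: then $\closure{O}$ is a topological manifold-with-boundary at $0$, yet every ray from $0$ meets $\closure{O}$ in a sequence of disjoint closed intervals accumulating at $0$, so no straight segment emanating from $0$ has an initial portion contained in $\closure{O}$. The two maps you conflate cannot both do the work, and the inward straight segment does not exist in general (a smooth curve into $O$ does exist in this example, but it must spiral and is not produced by your construction).

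For comparison, the paper's proof takes a different route: it starts from a continuous path in $\closure{\mc{M}}$, covers its image by finitely many charts of $\mc{Q}$, and replaces the path chart-by-chart with smooth arcs contained in $\psi_i(U_i\cap\closure{\mc{M}})$. That proof simply asserts the existence of those in-chart smooth arcs, which is the same local difficulty your step (ii) runs into; your version isolates the difficulty more cleanly (and correctly observes that the closure of a path-connected open set need not be path-connected, which the paper's opening line glosses over), but the corner-model argument as written does not close it. To repair (ii) you would need an actual regularity hypothesis on the boundary of $\mc{M}$ --- e.g.\ that obstacles are cut out by finitely many smooth or polytopic constraints in charts --- under which an inward cone in the \emph{smooth} coordinates of $\varphi$ genuinely exists.
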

\begin{proof}
	Because $\mc{M}$ is path connected, there is a continuous curve $\gamma:[a,b]\to\closure{\mc{M}}$ joining them.
	Let $(U_1,\psi_1),\ldots,(U_n,\psi_n)$ be a series of charts of $\mc{Q}$ that cover the image of $\gamma$, with $p\in U_1$, $q\in U_n$, and $U_i\cap U_{i+1}\cap\closure{\mc{M}}\ne\emptyset$ for each $i$.
	(Such a finite covering exists because the image of $\gamma$ is compact.)
	Let $t_0,\ldots,t_n\in[a,b]$ such that $t_0=a$, $t_n=b$, and for each $i=1,\ldots,n-1$, $\gamma(t_i)\in U_i\cap U_{i+1}$.
	For each $i=1,\ldots,n-1$, let $\tilde\gamma_i$ be a smooth curve joining $\psi_i(\gamma(t_i))$ to $\psi_{i+1}(\gamma(t_{i+1}))$ that is contained within $\psi_i(U_i\cap\closure{\mc{M}})$.
	Let $\tilde\gamma_0$ join $\psi_1(\gamma(t_0))$ to $\psi_1(\gamma(t_1))$ and be contained within $\psi_1(U_1\cap\closure{\mc{M}})$, and let $\tilde\gamma_n$ join $\psi_n(\gamma(t_{n-1}))$ to $\psi_n(q)$ and be contained within $\psi_n(U_n\cap\closure{\mc{M}})$.
	Then by lifting each of these curves to $\closure{\mc{M}}$, and concatenating them, we obtain a piecewise-smooth curve connecting $p$ and $q$.
\end{proof}

\hfill

\begin{proof}[Proof of \Cref{thm:burago}]
	The proof follows by verifying that $\closure{\mc M}$ is a complete, locally compact length space, so that we can apply Theorem 2.5.23 of \citet[p.~50]{burago2022course}.
	A length space is a metric space in which the distance between any two points is given by the infimum of the arc lengths of all paths connecting those two points.
	A length space is complete if the distance between any two points is finite.
	Thus, $\closure{\mc{M}}$ inherits a length structure from $\mc{Q}$ (with the restriction to curves that are entirely contained in $\closure{\mc{M}}$).
	All topological manifolds are locally compact.
	To check that $\closure{\mc{M}}$ is complete, let $p,q\in\closure{\mc{M}}$.
	By \Cref{lemma:piecewise_smooth_path}, there is a piecewise-smooth curve connecting $p$ and $q$, so the set of arc lengths of curves connecting $p$ and $q$ is nonempty.
	It is also bounded below, so its infimum is finite, and thus $\dist(p,q)$ exists.
	We conclude that $\closure{\mc{M}}$ is a complete, locally compact length space.
\end{proof}

\subsection{Proof of \texorpdfstring{\Cref{thm:convex_product}}{Theorem 4}}
\label{proof:convex_product}

\begin{lemma}
	Let $(\mc{Y}_1,\psi_1)$ and $(\mc{Y}_2,\psi_2)$ be coordinate charts of $\mc{M}$, with $\psi_1$ and $\psi_2$ local isometries, and $\mc{Y}_1$ and $\mc{Y}_2$ g-convex.
	Then there is a Euclidean isometry $\xi$ such that $\forall p\in\mc{Y}_1\cap\mc{Y}_2$, $\psi_1(p)=(\xi\of\psi_2)(p)$.
	\label{lemma:off_by_an_isometry}
\end{lemma}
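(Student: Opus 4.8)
The plan is to show that the transition map $\tau=\psi_1^{\vphantom{1}}\of\psi_2\inv$, restricted to $\psi_2(\mc{Y}_1\cap\mc{Y}_2)$, is the restriction of a single global rigid motion of Euclidean space; taking $\xi$ to be that rigid motion then immediately yields $\psi_1(p)=\xi(\psi_2(p))$ for every $p\in\mc{Y}_1\cap\mc{Y}_2$. If the overlap is empty the claim is vacuous, so I assume it is nonempty.

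First I would record two structural facts, one about the domain and one about $\tau$ itself. Since $\mc{Y}_1$ and $\mc{Y}_2$ are g-convex, so is their intersection $\mc{Y}_1\cap\mc{Y}_2$; and because a local isometry into Euclidean space carries the unique minimizing geodesic between two points of a g-convex set to the straight segment joining their images (using that charts map geodesics to geodesics and that geodesics are unique in $\R^d$), the image $\psi_2(\mc{Y}_1\cap\mc{Y}_2)$ is a \emph{convex} open subset of $\R^d$ in the ordinary Euclidean sense. Second, since $\psi_1$ and $\psi_2$ are local isometries onto open subsets of $\R^d$ equipped with the canonical metric, the composite $\tau$ is a local isometry between open subsets of $\R^d$; in particular its differential $\tau_{*,x}$ is an orthogonal linear map at every point $x$ of its domain.

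The key step is to upgrade this pointwise (infinitesimal) rigidity to a single affine formula, and here the convexity of the domain does the work. Fix a base point $x_0\in\psi_2(\mc{Y}_1\cap\mc{Y}_2)$. For any other point $x$ in the domain, convexity guarantees that the straight segment $\sigma(t)=x_0+t(x-x_0)$, $t\in[0,1]$, stays inside the domain, and $\sigma$ is a geodesic of Euclidean space. Because a local isometry carries geodesics to geodesics, $\tau\of\sigma$ is a constant-speed straight line, whose velocity is the pushforward $\tau_{*,x_0}(x-x_0)$ of $\sigma'(0)$; evaluating at the endpoint $t=1$ gives
\begin{equation}
	\tau(x)=\tau(x_0)+\tau_{*,x_0}(x-x_0).
	\label{eq:tau_is_affine}
\end{equation}
Thus $\tau$ agrees on its entire domain with the affine map $x\mapsto Ax+b$, where $A=\tau_{*,x_0}$ is orthogonal and $b=\tau(x_0)-Ax_0$.

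Finally I would set $\xi(x)=Ax+b$. Since $A$ is orthogonal, $\xi$ is a global Euclidean isometry, and by \eqref{eq:tau_is_affine} it restricts to $\tau$ on $\psi_2(\mc{Y}_1\cap\mc{Y}_2)$. Hence for every $p\in\mc{Y}_1\cap\mc{Y}_2$ we obtain $\psi_1(p)=\tau(\psi_2(p))=\xi(\psi_2(p))=(\xi\of\psi_2)(p)$, as desired. The main obstacle is precisely this globalization: a local isometry is only \emph{a priori} a rigid motion near each point, and one must rule out the orthogonal part ``twisting'' as one moves across the overlap. I sidestep the usual second-derivative (symmetrization) computation by exploiting that the chart image of a g-convex set is genuinely convex, which lets the geodesic-preservation property pin down one linear part $A$ in a single stroke.
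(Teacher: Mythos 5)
Your proof is correct, and it shares the paper's first two steps --- the overlap $\mc{Y}_1\cap\mc{Y}_2$ is g-convex and hence connected, and the transition map $\tau=\psi_1^{\vphantom{1}}\of\psi_2\inv$ is a local isometry between open subsets of Euclidean space, so its differential is orthogonal at every point --- but it diverges at the globalization step, which is the real content of the lemma. The paper disposes of that step by citing the classical rigidity (Liouville) theorem, Theorem 1.8-1 of Ciarlet, which states that a $\mc{C}^1$ map on a \emph{connected} open subset of $\R^d$ whose gradient is everywhere orthogonal is the restriction of a single rigid motion. You instead observe that $\psi_2(\mc{Y}_1\cap\mc{Y}_2)$ is genuinely convex (the same geodesics-to-segments argument the paper uses in \Cref{subsec:formulation_as_gcs} to show each $\mc{X}_v$ is convex) and then read off the affine formula $\tau(x)=\tau(x_0)+\tau_{*,x_0}(x-x_0)$ directly from the fact that a local isometry carries the segment from $x_0$ to $x$ to a constant-speed straight line with initial velocity $\tau_{*,x_0}(x-x_0)$. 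This is self-contained and avoids the external citation, at the mild cost of using convexity of the image where the rigidity theorem needs only connectedness; since convexity is available for free in this setting, nothing is lost, and both routes produce the same $\xi$.
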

\begin{proof}
	$\mc{Y}_1\cap\mc{Y}_2$ is g-convex, and hence connected.
	$\psi_1\of\psi_2\inv$ is a local isometry between two connected open subsets of Euclidean space (with appropriate restriction of domain and range), so $(\psi_1\of\psi_2\inv)_{*,p}$ is an orthogonal matrix for any $p$.
	Thus, we can apply Theorem 1.8-1 of \citet[p.~44]{ciarlet1988elasticity}.
\end{proof}

\begin{lemma}
	Consider $\mc{Y}\subseteq\mc{Z}\subseteq\mc{M}$, where $\mc{Z}$ is g-convex, and we have a coordinate chart $(\mc{Z},\psi)$ such that $\psi$ is a local isometry.
	If $\psi(\mc{Y})$ is convex, then $\mc{Y}$ is g-convex.
	\label{lemma:convex_in_coordinates}
\end{lemma}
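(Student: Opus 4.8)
The plan is to fix arbitrary $p,q\in\mc{Y}$ and exhibit the unique length-minimizing geodesic between them inside $\mc{Y}$. Since $\mc{Y}\subseteq\mc{Z}$ and $\mc{Z}$ is g-convex, there is already a \emph{unique} length-minimizing geodesic $\gamma$ of $\mc{M}$ joining $p$ and $q$, and it is entirely contained in $\mc{Z}$. Uniqueness therefore requires no further work; the whole task reduces to upgrading the containment $\gamma\subseteq\mc{Z}$ to the stronger statement $\gamma\subseteq\mc{Y}$.

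To do this, I would push $\gamma$ through the chart. Because the image of $\gamma$ lies in $\mc{Z}$ and $\psi$ is a local isometry (hence a diffeomorphism onto its image that carries geodesics to geodesics), $\psi\of\gamma$ is a geodesic of the Euclidean set $\psi(\mc{Z})$, and Euclidean geodesics trace out straight lines. The crux is then to identify $\psi\of\gamma$ with the straight segment joining $\psi(p)$ and $\psi(q)$: a priori, a geodesic with these two endpoints could backtrack or overshoot along the same line, so its image need not coincide with that segment.

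I would pin this down with a distance sandwich. Since $\psi$ preserves the speed of curves, it preserves arc length, so the Euclidean length of $\psi\of\gamma$ equals $L(\gamma)=\dist(p,q)$, the last equality because $\gamma$ is minimizing. On the other hand, convexity of $\psi(\mc{Y})$ guarantees that the straight segment from $\psi(p)$ to $\psi(q)$ lies in $\psi(\mc{Y})$; lifting it by $\psi\inv$ produces a path in $\mc{Y}\subseteq\mc{Z}\subseteq\mc{M}$ from $p$ to $q$ of length exactly $\norm{\psi(p)-\psi(q)}_2$, whence $\dist(p,q)\le\norm{\psi(p)-\psi(q)}_2$. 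Combining this with the Euclidean triangle inequality, which says the length of $\psi\of\gamma$ is at least $\norm{\psi(p)-\psi(q)}_2$, forces every inequality to collapse to an equality. A Euclidean curve whose length equals the distance between its endpoints must be exactly that straight segment, so the image of $\psi\of\gamma$ is the segment, which lies in $\psi(\mc{Y})$; applying the injective map $\psi\inv$ then gives $\gamma\subseteq\mc{Y}$.

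This finishes the argument: for every $p,q\in\mc{Y}$ the unique minimizing geodesic of $\mc{M}$ lies in $\mc{Y}$, which is precisely g-convexity. I expect the main obstacle to be the middle step — ruling out that the minimizing geodesic strays from the segment $[\psi(p),\psi(q)]$ under $\psi$ — and the length/distance sandwich is the device that settles it; uniqueness and the initial containment in $\mc{Z}$ come for free from the g-convexity of $\mc{Z}$, and the length-preservation of the chart does the rest.
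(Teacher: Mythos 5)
Your argument is correct and follows essentially the same route as the paper's proof: take the unique minimizing geodesic supplied by the g-convexity of $\mc{Z}$, push it through the chart to a straight segment, invoke convexity of $\psi(\mc{Y})$, and pull back. The only difference is that you add a length/distance sandwich to certify that the image of the geodesic is exactly the segment $[\psi(p),\psi(q)]$ rather than some other line segment --- a point the paper's proof passes over silently --- and this extra verification is sound.
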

\begin{proof}
	Fix $p,q\in\mc{Y}$.
	Then there is a unique minimizing geodesic $\gamma$ connecting $p$ to $q$, and $\gamma$ is contained in $\mc{Z}$.
	Because $\psi$ is a local isometry, it maps $\gamma$ to a line segment in $\psi(\mc{Z})$.
	$\psi(p),\psi(q)\in\psi(\mc{Y})$, so by convexity of $\psi(\mc{Y})$, $\psi\of\gamma$ is contained in $\psi(\mc{Y})$.
	Thus, $\gamma$ is contained in $\mc{Y}$, so $\mc{Y}$ is g-convex.
\end{proof}

\hfill

\begin{proof}[Proof of \Cref{thm:convex_product}]
	For each $i\in[m]$, we can construct a ball $B_{s_i}(c_i)\supseteq\proj_{\mc{Q}_i}(\mc{Y})$, with $s_i<r_i$.
	Define $\mc{Z}=\prod_{i\in[m]}B_{s_i}(c_i)$, a g-convex set.
	Consider the Riemannian normal coordinates of $\mc{Q}$ at $(c_1,\ldots,c_m)$.
	This coordinate system, restricted to $\mc{Z}$, induces a coordinate chart $\varphi$.
	Because $\mc{Q}$ is flat, $\varphi$ is a local isometry, so by \Cref{lemma:off_by_an_isometry} there is a Euclidean isometry $\xi$ such that $\varphi(\mc{Y})=\xi(\psi(\mc{Y}))$, so $\varphi(\mc{Y})$ is convex.
	Thus, by \Cref{lemma:convex_in_coordinates}, $\mc{Y}$ is g-convex.
\end{proof}

\begin{figure}
	\centering
	\includegraphics[width=0.6\linewidth]{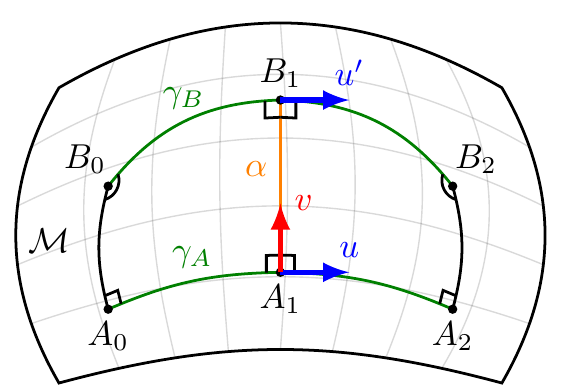}
	\caption{
		The construction of two Levi-Civita parallelogramoids used in the proof of Theorem~\ref{thm:positive_nonconvexity}.
		$\dist(A_0,B_0)<\dist(A_1,B_1)$ and $\dist(A_2,B_2)<\dist(A_1,B_1)$, which demonstrates the nonconvexity of $\dist$ around $A_1$.
		$\angle A_0 B_0 B_1$ and $\angle A_2 B_2 B_1$ are obtuse.
	}
	\label{fig:positive-curvature}
\end{figure}

\subsection{Proof of \texorpdfstring{\Cref{thm:positive_nonconvexity}}{Theorem 5}}
\label{proof:positive_nonconvexity}

\begin{proof}[Proof of \Cref{thm:positive_nonconvexity}]
	Fix a neighborhood $\mc{U}$ of $A_1$.
	Since the sectional curvature is invariant with respect to a change of basis, suppose without loss of generality that $\norm{u}=\norm{v}=1$ and $\iprod{u,v}=0$.
	To prove this result, we will construct a geodesic $\gamma$ on $\mc{U}^{2}$ such that $\dist\of\gamma$ achieves smaller values on its endpoints than at its center.
	This relies on the properties of specially constructed Levi-Civita parallelogramoids on $\mc{U}$.

	Let $\epsilon>0$ be small, such that $\exp_p(\overline{B_{2\epsilon}(0)})\subseteq\mc{U}$.
	Let $\alpha:t\mapsto\exp_p(tv)$, let $B_1=\alpha(\epsilon)$, and let $u'=\Gamma(\alpha)_0^\epsilon(u)$ be $u$ parallel transported from $A_1$ to $B_1$.
	Let $\gamma_A:t\mapsto\exp_{A_1}(tu)$ and $\gamma_B:t\mapsto\exp_{B_1}(tu')$, with domain $[-\epsilon,\epsilon]$.
	Then $\gamma=(\gamma_A,\gamma_B)$ is a geodesic of $\mc{U}^2$.
	Define $A_0=\gamma_A(-\epsilon)$, $B_0=\gamma_B(-\epsilon)$, $A_2=\gamma_A(\epsilon)$, and $B_2=\gamma_B(\epsilon)$.
	This construction is visualized in \Cref{fig:positive-curvature}.
	We want to show that $\dist(A_0,B_0)<\dist(A_1,B_1)$ and $\dist(A_2,B_2)<\dist(A_1,B_1)$.

	The points $A_1$, $B_1$, $A_2$, and $B_2$ describe a Levi-Civita parallelogramoid, with base $A_1B_1$ and suprabase $A_2B_2$.
	Thus, we can relate the length of the base and suprabase via the formula of~\cite[p.~244]{cartan1983geometry}
	\[
		\dist(A_2,B_2)^2=\dist(A_1,B_1)^2+\tfrac{8}{3}\iprod{\mc{R}(\epsilon u,\epsilon v)\epsilon u,\epsilon v}+O(\epsilon^5)
	\]
	Because $\norm{u}=\norm{v}=1$ and $\iprod{u,v}=0$,
	\[
		\iprod{\mc{R}(\epsilon u,\epsilon v)\epsilon u,\epsilon v}=-\epsilon^4\iprod{\mc{R}(u,v)v,u}=-\epsilon^4K(u,v)<0
	\]
	So for $\epsilon$ is decreased towards $0$, the fifth and higher order terms vanish, and $\dist(A_2,B_2)<\dist(A_1,B_1)$.
	A similar calculation shows that $\dist(A_0,B_0)<\dist(A_1,B_1)$.
	Thus, $d\of\gamma$ has a local minimum, so we conclude that $d$ is nonconvex on $\mc{U}$.
\end{proof}

\section{Experiment Implementation Details}
\label{appx:experiment_implementation}

In this appendix, we present further details about the setup of our experiments and demonstrations.

\subsection{Planar Arm}
\label{subappx:planar_arm_details}

The trajectories shown in \Cref{sec:experiments:planar_arm} were generated with a GGCS that had 19 sets.
We generated IRIS regions for the start and goal configurations, and hand-picked several seed points along the narrow gap between the two lower obstacles to help ensure connectivity between the start and goal.
We then generated the remaining IRIS regions with random seed points (chosen uniformly from C-Free with rejection sampling).

The GGCS-Planner results shown used the sum of the path length and path duration as the objective.
We used the relax-and-round approximation strategy to produce the trajectories shown in the paper.
The first trajectory had a path length of 7.749, and the second had a path length of 8.448.
When solving the integer program with branch-and-bound, the first trajectory had a path length of 7.274, and the second had a path length of 8.008.
(Note that the optimal solution for the latter trajectory still had the middle joint of the arm traverse more than 360\textdegree.)

\subsection{KUKA iiwa Arm}
\label{subappx:kuka_iiwa_details}

The motions shown in \Cref{sec:experiments:kuka_iiwa} used regions generated from 18 seed points.
The seeds consisted of one seed for each middle and top shelf (the bottom shelves are excluded because they are kinematically unreachable), one seed above each shelf, one seed directly between each shelf, and two seeds per shelf to aid moving between the top and middle shelves.
Regions were generated for each seed with both an empty hand and a mug in the hand to aid both types of trajectory planning.
Regions were post-processed to remove redundant hyperplanes with the \href{https://drake.mit.edu/doxygen_cxx/classdrake_1_1geometry_1_1optimization_1_1_h_polyhedron.html}{\texttt{ReduceInequalities}} algorithm from Drake.

The GGCS-Planner minimized both time and path length of the trajectory while ensuring continuity of the path, velocity, and acceleration.
For velocity limits, the real velocity limits of the KUKA iiwa hardware were used.
Trajectories were computed using the relax-and-round approximation strategy.

\subsection{PR2 Bimanual Mobile Manipulator}
\label{subappx:pr2_details}

To model the PR2 robot, we use the URDF file and object meshes included with Drake.
For each link, we take the convex hull of the mesh and use that as the collision geometry.
(Collisions annotated in \Cref{tab:trajectory-lengths} are determined based on the true collision geometry, not the convex hulls.)
The plans we produce take into account the robot's base joint, torso lift joint, and all arm joints (up to the final wrist rotation joint and gripper joints).
All other joints are fixed.

For the experiments demonstrated in \Cref{sec:experiments:pr2}, we first constructed IRIS regions for each of the possible goals: reaching into each of the three shelves in a set with both arms, crossing right-over-left on the middle and bottom shelves, and crossing left-over-right.
(See \Cref{fig:teaser} for a visualization of these cross-over poses.)
We then hand select a few intermediate seed points; the regions around these points are used to promote connectivity among the various shelf-reaching regions.
We construct these regions for each set of shelves, except for the experiments where the start and goal are within the same set of shelves.

We take several actions to improve the efficiency of GGCS-Planner.
To reduce the number of constraints needed, we simplify the IRIS regions by removing redundant halfspaces from their polyhedral representation, using the \href{https://drake.mit.edu/doxygen_cxx/classdrake_1_1geometry_1_1optimization_1_1_h_polyhedron.html}{\texttt{ReduceInequalities}} algorithm in Drake.
We also only include shelf-reaching regions if they are the start or goal of the plan.
This greatly reduces the size of the optimization problem, promoting faster solve times.
Empirically, it also leads to a shorter trajectory, likely due to a tightening of the convex relaxation.
For GGCS-Planner, we use the same objective as the planar arm experiments (the sum of the trajectory length and duration), and we use the relax-and-round strategy.

For the comparison to kinematic trajectory optimization (Drake-Trajopt), we use the same objective as GGCS-Planner: the sum of the trajectory duration and length.
However, the trajectories are parametrized as B-splines instead of linear segments (or \bz{} curves if the extensions in \Cref{subsec:further_motion_planning} are utilized).
The \href{https://drake.mit.edu/doxygen_cxx/classdrake_1_1systems_1_1trajectory__optimization_1_1_kinematic_trajectory_optimization.html}{\texttt{KinematicTrajectoryOptimization}} can automatically construct the nonlinear optimization problem for a given scenario, which we then solve with SNOPT.
We first solve the problem without collision-free constraints.
The output of this initial problem is used as the initial guess for the full problem (i.e., including collision-free constraints).
The collision-free constraint is encoded with the \href{https://drake.mit.edu/doxygen_cxx/classdrake_1_1multibody_1_1_minimum_distance_constraint.html}{\texttt{MinimumDistanceConstraint}} class.
We set a minimum distance of 1mm and begin applying a penalty at 1cm.
This constraint is applied to 50 points along the trajectory.
(Such a constraint can only be evaluated pointwise.)
For motion planning tasks where the robot had to move between shelves, Drake-Trajopt was unable to produce a collision-free trajectory.
Thus, we added waypoints near the beginning and end of the trajectory, in which the robot was in the same configuration as the start and goal (respectively), but the base was moved away from the shelf.
This was only sometimes effective at finding collision-free trajectories.

As in \cite{marcucci2022motion}, we use the PRM planner from the Common Robotics Utilities library~\cite{common_robotics_utilities}, with the modifications described in \Cref{sec:experiments:pr2}.
Given a piecewise-linear trajectory from the PRM, we construct a B-spline that passes through the nodes on this trajectory for use as an initial guess for Drake-Trajopt.
In this case, when solving the optimization problem, we begin applying a distance penalty at 1m and perform collision checking at 100 points along the trajectory.
}

\end{document}